\documentclass[final,12pt]{colt2025} 


\title[Learning Algorithms in the Limit]{Learning Algorithms in the Limit}
\usepackage{macros_hristo}
\usepackage{times}


\coltauthor{\Name{Hristo Papazov} \Email{hristo.papazov@epfl.ch} \\ \Name{Nicolas Flammarion} \Email{nicolas.flammarion@epfl.ch} \\
\addr EPFL, Lausanne, Switzerland}




\begin{document}

\maketitle

\begin{abstract}
This paper studies the problem of learning computable functions in the limit by extending Gold’s inductive inference framework to incorporate \textit{computational observations} and \textit{restricted input sources}. Complimentary to the traditional Input-Output Observations, we introduce Time-Bound Observations, and Policy-Trajectory Observations to study the learnability of general recursive functions under more realistic constraints. While input-output observations do not suffice for learning the class of general recursive functions in the limit, we overcome this learning barrier by imposing computational complexity constraints or supplementing with approximate time-bound observations. Further, we build a formal framework around observations of \textit{computational agents} and show that learning computable functions from policy trajectories reduces to learning rational functions from input and output, thereby revealing interesting connections to finite-state transducer inference. On the negative side, we show that computable or polynomial-mass characteristic sets cannot exist for the class of linear-time computable functions even for policy-trajectory observations.
\end{abstract}

\begin{keywords}
learning in the limit, inductive inference, algorithmic learning theory, recursion theory
\end{keywords}


\section{Introduction}\label{sec:introduction}

Nearly a century ago, \citet{hilbert1928grundzuge} formulated ``the most general problem of mathematics'' -- the so-called \textit{Entscheidungsproblem}\footnote{See \citep{sep-entscheidungsproblem} for a brief history of the problem.} -- which sought an \textit{effective procedure} for deciding the provability of any first-order logical sentence from a given set of axioms. Despite Hilbert's steadfast optimism, progress eluded the Entscheidungsproblem for almost a decade because, perhaps ironically, ``the main problem of mathematical logic'' lacked a mathematical formulation. Indeed, the term \textit{effective procedure} stubbornly evaded formal treatment and only represented the intuitive idea of a finitely described finite process whose execution requires no spark of creativity. The breakthrough came with the work of \citet{church1936note} and \citet{turing1936computable}, who formalized the intuitive predicate \textit{effective} through \textit{$\l$-calculability} and \textit{Turing-computability}, respectively. This paradigm shift ultimately led to the resolution of the Entscheidungsproblem in the negative.

Interestingly, the death of Hilbert's program elucidated a remarkable coincidence in mathematics: Every effort to formalize the intuitive notion of an \textit{effectively computable} function appears to single out exactly the same set of functions. Initially proved for $\l$-calculability and Turing-computability, this extensional equivalence now captures all known reasonable\footnote{Some unreasonable formal models of computation, which lead to hypercomputation, utilize infinite precision arithmetic of real numbers and mechanisms for accelerating computation, see \citep{sep-computation-physicalsystems}. We refer the curious reader to \citep{aaronson2005guest} for a survey of how hypercomputation clashes with the known laws of physics.} formalisms of computation. 
Due to the sheer and exhaustive abundance of equivalent models of computation, nowadays, computer scientists, philosophers, and mathematicians widely accept the \textbf{Church-Turing Thesis} (CTT) which states that Turing machines (TMs) can \textit{simulate} all ``intuitively computable'' functions/processes. In our paper, we adopt the prevailing view that the umbrella term ``intuitively computable'' encompasses other informal notions such as ``effectively computable,'' ``computable by an algorithm,'' and ``computable by a harnessable physical process.''\footnote{For a broad disambiguation between different interpretations of the CTT, we refer the reader to \citep{sep-computation-physicalsystems, sep-church-turing}. Remarkably, the CTT lives at the intersection of mathematics, physics, and philosophy.} Hence, we take for granted that a TM can simulate any reasonable computational model (see \Cref{app:sec:ctt} for a discussion on simulations and the CTT).

\paragraph{Learning Computable Functions in the Limit.} 
Now, a computational model $M$ working on an enumerable domain $\D$ may not halt for all inputs $x \in \D$. Thus, when considering computation at the function level, we make a distinction between the family of total recursive functions $\CC_\D$ -- computable on the full domain $\D$, and the family of general recursive functions $G_\D$ -- computable on subsets of $\D$.\footnote{Note that $\CC_\D \subsetneq G_\D$ as proved by \cite{turing1936computable} through the undecidability of the Halting Problem.}
Having briefly argued that every function computable by the laws of nature or the complex processes of the human brain ultimately resides in $G_\D$ for the relevant domain $\D$, in this paper, we set out to prove that the family of general recursive functions are learnable in the limit from \textit{computational observations}.

In simplest terms, the learning-in-the-limit framework of \citet{gold1967language} refers to the scenario where a learning algorithm $\L$ at time $t \in \N$ observes a piece of information $i_t$ about an unknown function $f$ from a concept class $\O$ and comes up with a hypothesis based on everything observed so far: $h_t \leftarrow \L(i_1, \dots, i_t)$. If for every ground truth $f \in \O$ and every valid sequence of observations $(i_t)_{t=1}^\infty$, the learner always converges on the correct function after finitely many mistakes (i.e., $h_t = f, \ \forall t \geq t^\star$), 
then we say that $\L$ learns $\O$ in the limit and define $\O$ as learnable in the limit.

Though concise, the above description left out some important details. First, clearly $\O \subseteq G_\D$ for some enumerable domain of computation $\D$. Second, with the notation $h_t \leftarrow \L(i_1, \dots, i_t)$, we meant that the learning algorithm at time $t$ outputs a representation $R_t$ belonging to a computable set of representations $\RR$\footnote{For example, one could think of $\RR$ as the set all finite-state automata, cellular automata, or TMs.} such that $R_t$ generates the function $h_t$ when executed on a simulator, like a Universal TM. Third, each example $i_t = (x_t, f(x_t), \a(M, x_t))$ includes an input $x_t$, a function value $f(x_t)$, and auxiliary information $\a(M, x_t)$\footnote{Without $\a(M, x)$, we arrive at \cite{gold1967language}'s setup of \textit{learning in the limit with an informant}.} about the computation $x_t \mapsto f(x_t)$ by some computational model $M$. Crucially, any valid sequence of observations $(i_t)_{t=1}^\infty$ must exhaust all possible inputs: i.e., $\forall x \in \D \ \exists t \in \N$ s.t. $i_t = (x, f(x), \a(M, x))$.\footnote{We relax this requirement in our paper. Instead, if all observed inputs come from an input source $I \subseteq \D$, we will require a successful learner $\L$ to learn a representation for a function $h$ in the limit s.t. $h_{|I} = f_{|I}$.} Finally, the requirement that $\L$ must correctly learn $f$ in the limit for any ordering of the example set  $\{(x, f(x), \a(M, x)) : x \in \D\}$ exists to facilitate genuine learning. Otherwise, an ordering function could encode $M$ as the first input, allowing $\L$ to learn $f$ without meaningful generalization.

\paragraph{Characteristic Sets.}
Learning-in-the-limit algorithms should correctly identify concepts for any adversarial ordering of examples. Nevertheless, one could still study the sample efficiency of such algorithms in the presence of favorable curricula through the notion of \textit{characteristic sets}. First introduced by \citet{gold1978complexity} for finite-state transducers (FSTs) and later generalized by \citet{de1997characteristic} for arbitrary concept classes, characteristic sets represent a core set of inputs linked to a representation which once observed allow the learner to identify the ground truth.

\paragraph{Additional Information.} 
Gold proved that, when the observations include no additional information about the computational process taking $x$ to $f(x)$ (i.e., $\a(M, x) = \eps$ -- the empty string), no algorithm can learn the concept class of totally recursive functions $\CC_\D$ in the limit \citep[Theorem~I.5]{gold1967language}. Although, we revisit this input-output-observations (IOOs) framework in \Cref{sec:blackbox}, we believe that intelligent agents hardly ever learn new skills from such impoverished observations. In the real world, learning systems observe computational processes, not the mere results of computation. For example, in human learning, teachers guide students through the process of reaching the correct answer and explain numerous intermediate steps. Similarly, research in machine learning (\citet{wei2022chain, anil2022exploring}) demonstrates that both pretraining and in-context prompting with chain-of-thought (CoT) data significantly improve reasoning accuracy and facilitate length generalization. Motivated by these considerations, we propose two types of \textit{computational observations} as an extension to the original framework:
\begin{itemize}[label = $\circ$, topsep=0pt, left=5pt, itemsep=0pt, labelsep=5pt]
    \item \textit{Time-Bound Observations (TBO).} 
    The learner $\L$ only gains a general sense of the task's hardness through a rough upper bound $\atb(M, x) \in \N$ on the number $t_M(x)$ of discrete computation steps the computational model $M$ takes to compute $f(x)$. That is, we assume that up to some model-dependent scale, the upper bound $t_M(x) = O_M(\atb(M, x))$ holds.
    \item \textit{Policy-Trajectory Observations (PTO).} The learner $\L$ observes the full interaction of the computational agent $M$ with the environment without observational access to $M$'s internal state updates. Instead, $\L$ perceives only external behavior -- analogous to a student looking at a teacher write on a blackboard without access to the teacher's neuronal activity. Thus, $\a_{\mathrm{PT}}(M, x)$ represents the external, readily observable part of $M$'s computational trajectory.\footnote{In \Cref{sec:CoT}, we will draw connections to Learning from Demonstrations -- a subgenre of Reinforcement Learning, and show that $\apt(M, x)$ truly constitutes observations of trajectories of the agent's policy.}
    
\end{itemize}
Note that PTOs provide strictly more information than TBOs since the learner observes the exact number of computational steps. For our \textit{TBO-Learning} results, we allow observations over arbitrary computational models. However, extracting meaningful information from PTOs requires a fixed model family. Indeed, inferences from external behavior to internal computation only work if the learner knows the model architecture in advance. Thus, all our \textit{PTO-Learning} results concern TMs.

\paragraph{Paper Contributions and Organization.}
\Cref{sec:preliminaries} develops the formal foundation, defining computational models and agents; learning in the limit with $\a$-observations and restricted inputs; and learning efficiency through characteristic sets. The next sections contain our main contributions:
\begin{itemize}[topsep=0pt, left=5pt, itemsep=0pt, labelsep=5pt]
    \item \textit{Learning from IOOs.} In \Cref{sec:blackbox}, we generalize the learnability results from \citep{gold1967language} by proving that any parameterized complexity class of general recursive functions is learnable in the limit from IOOs. In particular, polynomial-time computable functions are learnable in the limit.
    \item \textit{Learning from TBOs.} In \Cref{sec:clock}, we prove using the Extended Church-Turing Thesis (ECTT) that regardless of the computation domain $\D$ and the model $M_f$ computing the ground-truth $f$, the concept class $G_\D$ is learnable in the limit from TBOs. Notably, without referencing the ECTT, we prove that the family of Turing-computable functions is learnable in the limit from TBOs.
    \item \textit{Learning from PTOs.} In \Cref{sec:CoT}, we prove that no algorithm for learning the family of Turing-computable functions in the limit can have polynomial characteristic sets even with PTOs. Additionally, we reduce the problem of learning general recursive functions  with PTOs to the problem of learning FSTs from input-restricted IOOs.
\end{itemize}
Finally, \Cref{sec:conclusion} concludes with implications and open problems, and the appendices include detailed technical proofs and supporting discussion. 

\paragraph{Related Work.}
Our work builds upon prior research in formal language learning and recent progress in learning computable functions. Here, we outline key contributions relevant to our framework.

Concerning FSTs, \citet{gold1978complexity} proved that the class of finite-state transducers is \textit{identifiable in the limit from polynomial time and data} (IPTD). Subsequently, \citet{oncina1992inferring} introduced the RPNI algorithm -- a polynomial-time state-merging method for learning FSTs in the limit. Thereafter, \citet{de1997characteristic} formalized the notion of characteristic sets, demonstrating that context-free grammars and other formal classes are not IPTD. Furthermore, \cite{de1997characteristic} noticed that the RPNI algorithm implies that FSTs are polynomially $T/L$-teachable -- a stronger learnability condition developed by \cite{goldman1993teaching}. Building on this line of work, \citet{parekh2001learning} proved that FSTs are PAC-learnable under simple distributions and established several equivalences between active and passive learning frameworks.

Recently, \citet{malach2024auto-regressive} showed that any computable function over fixed-length binary inputs is PAC-learnable by a linear autoregressive model when trained on CoT sequences. We extend this result by proving that computable functions over arbitrary input lengths are exactly learnable from a finite number of CoT observations. 
Finally, \citet{kleinberg2024language} revisited the learning-in-the-limit framework in the context of large language models, highlighting that \textit{generation is easier than identification} through a proof that learning to generate from a ground truth language is possible—under broad conditions—when only observing positive examples. 


\section{Formal Framework} \label{sec:preliminaries}
In this section, we build the formal bedrock for our learning-in-the-limit results. We group the topics thematically into two subsections dedicated to computational models and learning theory, respectively. Before proceeding, we introduce key notation. Let $\eps$ denote the empty string. For any set $S$, the Kleene plus $S^+$ denotes $\cup_{n=1}S^n$, while the Kleene star $S^\star$ stands for $\{\eps\} \cup S^+$. For an enumerable domain $\D$, we define the families of general recursive $G_\D = \{f : \D \partialto \D \mid f - \text{general recursive}\}$ and total recursive, or computable, functions $\CC_\D = \{f : \D \to \D \mid f - \text{total recursive}\}$. For a general recursive function $f \in G_\D$, we use $D_f \subseteq \D$ to denote the set of inputs on which $f$ is defined.

\subsection{Computational Models} \label{subsec:CM}

We begin with some unconventional terminology, particular to our paper. The class of functions $G_\D$ emerges from the interaction between abstract machines and symbolic environments that house the input-output domain $\D$. We define an \textbf{abstract machine} as a computable dynamical system with a deterministic discrete-time evolution. When one supplies an abstract machine with a formalized input-setting and output-reading convention, then the resulting system becomes a \textbf{computational model} on $\D$. Let $\M_\D$ denote the set of models computing the general recursive functions in $G_\D$. For a model $M \in \M_\D$, we use $D_M \subseteq \D$ to denote the set of inputs for which $M$ halts.

We start our discussion of computational models with a formal introduction of finite-state transducers (FSTs) and Turing machines (TMs). Then, we muse a bit over the (Extended) Church-Turing Thesis and uncover a paradox that to the best of our knowledge appears unaddressed by the computability literature. We conclude the section with a formal description of computational agents.

\subsubsection{Finite-State Transducers} 
FSTs, also known as Mealy machines, serve as a fundamental models of computation, representing sequential decision processes with finite memory. FSTs map input sequences to output sequences via state transitions and play a key role in learning theory and formal language theory.
\begin{definition}[Finite-State Transducers] \label{def:FST}
    An FST $M$ is a 6-tuple $M = (Q, A, B, \d, \g, q_0)$ where $Q$ is a finite set of states, $A$ is a finite input alphabet, $B$ is a finite output alphabet, $\d : Q \times A \to Q$ is a transition function, $\g : Q \times A \to B$ is an output function, and $q_0 \in Q$ is the start state.
\end{definition}
To extend $\d$ and $\g$ to longer inputs, we recursively define
$\d(q, a_1 \dots a_{n+1}) \coloneqq \d(\d(q, a_1 \dots a_n), a_{n+1})$, $\g(q, a_1 \dots a_{n+1}) \coloneqq \g(\d(q, a_1 \dots a_n), a_{n+1})$
for all $n \geq 1$ and $a_1 \dots a_{n+1} \in A^{n+1}$. We define the \textbf{semantics} of $M$ as the function $\g_M = \g(q_0, \cdot) : A^+ \to B$, and we use $M : A^+ \to B^+$ for the seq2seq map $M(a_1, \dots, a_n) = (\g_M(a_1), \dots, \g_M(a_1, \dots, a_n))$. For a function $\chi : A^+ \to B$, we define the \textbf{Generalized Nerode Equivalence} as $u \equiv_\chi v \iff \chi(u \cdot w) = \chi(v \cdot w), \ \forall w \in A^+$.\footnote{We use $\cdot$ to denote concatenation. We invite the reader to \Cref{app:sec:fst} for additional discussion on FSTs.} We let $[u]_\chi$ denote the equivalence class of $u \in A^\star$, and we use $|\chi|$ for the number of equivalence classes. If $|\chi| < \infty$, we call $\chi$ a \textbf{rational map}. Since $|\g_M| \leq |Q|$, an FST semantics always represents a rational map. Surprisingly, the converse also holds, as per the generalized Myhill-Nerode theorem.

We denote the set of FSTs over I/O alphabets $A/B$ by $\Phi_A^B$ and the set of rational functions by $\mathcal{P}_A^B$. The set $\wt{\Phi}_A^B$ stands for partial FSTs with missing transitions, with the corresponding set of partial rational functions -- $\wt{\mathcal{P}}_A^B$. Following \citet{oncina1992inferring}, we define quotient transducers, with merged states according to a partition $\pi$ of $Q$.
\begin{definition}[Quotient Transducers] \label{def:quotient}
    Let $M = (Q, A, B, \d, \g, q_0) \in \wt{\Phi}_A^B$ and let $\pi$ be a partitioning of the state-space $Q$. We denote by $B(q, \pi) \subseteq Q(M)$ the unique cluster of the partitioning containing $q \in Q$. Now, we define the quotient (possibly non-deterministic) FST $M / \pi = (Q', A, B, \d', \g', q_0')$ as follows: $Q' = \{B(q, \pi) : q \in Q\}$; $(\d', \g')(B, a) = \{ (B', b) : \exists q \in B, q' \in B' \text{ s.t. } (\d, \g)(q, a) = (q', b)\}$, $\forall B \in Q', a \in A$; $q_0' = B(q_0, \pi)$. 
\end{definition}
To differentiate states, we use the notion of \textbf{state apartness}, following \citet{vaandrager2022new}. Two states $p, q \in Q$ are \textbf{apart}, denoted $p \# q$, if there exists a \textbf{distinguishing string} $\s \in A^+$ such that $\g(p, \s) \neq \g(q, \s)$. Hopcroft's algorithm \citep{hopcroft1971n} efficiently determines state apartness.

\subsubsection{Turing Machines}
One can define Turing machines with various architectural choices, including different numbers of heads and tapes, input-output conventions, halting conditions, and alphabet sizes. While all such variants prove computationally equivalent \citep[Chapter 1]{arora2009computational}, some provide greater convenience for exposition. We adopt a specific TM architecture that best suits our discussion.

Let $\l$ denote the blank symbol. Given a finite problem alphabet $\Sigma$ s.t. $\l \notin \Sigma$ and a finite tape alphabet $\G \supseteq \Sigma \cup \{\l\}$, our model family $\T$ computes all general recursive functions in $G_\SS$.

\begin{definition}[Turing Machine] \label{def:TM}
    A TM $T \in \mathrm{TM}^\Gamma$ is a triple $T = (Q, \G, \d)$ where $Q$ is a finite set of states, $\G$ is a finite tape alphabet, and $\d : Q \times \G \to Q \times \G \times \{L, R ,S\}$ is a transition function.
\end{definition}

\textit{TM Computation.}
In the above definition, the symbols $L, R,$ and $S$ dictate whether the TM head moves left, right, or stays in place after transitioning to a new state and writing a symbol from $\G$ over the current cell of a bidirectionally infinite tape. The computation of any $T \in \mathrm{TM}^\Gamma$ proceeds as follows: The input $x \in \SS$ is written on an otherwise-blank tape with the TM head positioned at the first symbol of $x$ and $T$ initialized at a specified initial state $q_0 \in Q$. The TM then follows the transition function $\d$. Computation halts when $T$ enters a designated halt state $q_f \in Q$. The output $T(x)$ is defined if and only if just a single contiguous string from $\SS$ occupies the tape in the moment of halting. Otherwise, $T(x)$ remains undefined. Now, the class of abstract machines $\mathrm{TM}^\Gamma$, together with this computation convention, specifies the model class $\T$ over the enumerable domain $\SS$. We overload the definition of a TM to refer both to an abstract machine and a computational model.

\textit{Representation Complexity.} 
We define $|T|$ as the number of states in $T \in \T$. Each TM uses exactly $|T| \cdot |\G|$ transitions. Since we keep the tape alphabet $\G$ constant, $O(|T| \log |T|)$ bits suffice for encoding $T$. Thus, $|T|$ serves as a meaningful proxy for the representational complexity of $T$.

\subsubsection{Encodings and Simulations}

The Church-Turing Thesis (CTT) effectively defines computability through Turing-computability by claiming that TMs can simulate any intuitively computable function modulo a \textit{reasonable} encoding. However, the term ``reasonable'' appears to rely on the notion of ``computable'' leading to an infinite regress. To resolve this paradox, we propose grounding reasonability in first-order logic (FOL). Thus, we define an encoding as reasonable if and only if the encoding derives from a TM-computable transformation of an inherent FOL representation of the domain (see details in \Cref{app:sec:ctt}).

\paragraph{The Extended Church-Turing Thesis}
As computability theory evolved, considerations extended beyond feasibility to the \textit{efficiency} of computation. Accordingly, the CTT received a strengthening in the form of the \textbf{Extended Church-Turing Thesis} (ECTT)\footnote{See \citep{sep-church-turing} for a detailed discussion.}, which asserts that the class of \textit{efficiently computable problems} is model-independent. Specifically, for any physically realizable computational model $M$, there exists a constant $c \in \N$ such that if $M$ evolves for $t$ steps, a TM $T \in \T$ can simulate $M$'s evolution within $t^c$ steps \citep[Chapter 1.5.2]{arora2009computational}. While less universally accepted, the ECTT remains a widely used conjecture. We adopt the following relaxed formulation.
\begin{assumption}[$q$-Extended Church-Turing Thesis] \label{ass:q-ECTT}
    Let $q : \N \times \N \to \N$ be a computable monotonically increasing ``overhead'' function. The $q$-ECTT states that for any physically realizable computational model $M$, there exists a constant $c \in \N$ and a TM $T \in \T$ such that if a computation on $M$ takes $t$ steps, then $T$ simulates the same computation in $q(c, t)$ steps.
\end{assumption}
For a computational model $M$ over a domain $\D$, the $q$-ECTT implies the existence of a reasonable encoding $\varphi : \D \to \SS$ such that some $T \in \T$ achieves the specified time-bound.
Since switching between non-adversarially constructed encodings introduces at most a constant slowdown to the simulation time\footnote{The TM $(\phi \circ \varphi^{-1}) \circ T \circ (\varphi \circ \phi^{-1})$ simulates $M$ with at most a constant slowdown for naturally constructed $\phi$.}, we take as granted that the $q$-ECTT implies the additional assumption that all natural encodings of $\D$ into $\SS$ cause the $q$-ECTT to kick in.

\subsubsection{Computational Agents}

Our \textit{IOO--Learning} (\Cref{sec:blackbox}) and \textit{TBO--Learning} (\Cref{sec:clock}) results apply to the general unrestricted model family $\M_\D$ over the domain $\D$. However, for \textit{PTO--Learning} (\Cref{sec:CoT}), we introduce the concept of a computational agent with observable behavior.

Following \citet{russell2016artificial}, we view a \textbf{computational agent} as a system that perceives and acts upon an environment. We model this environment as a discrete symbolic universe $\U$ for two key reasons. First, upon introspection, any reasoning or mechanical task computed by humans appears to involve manipulation of symbols in a discrete space. Second, though physical reality possibly incorporates both discrete and continuous quantities, the unavoidable noise in physical observations will cause the sensory data of any computational system to achieve only finite precision.

\begin{definition}[Symbolic Universe] \label{def:universe}
    Given an enumerable set $G$ and a finite set of symbols $\G$ containing the empty symbol $\l$, a world-state $w : G \to \G$ is a function with finitely many non-empty assignments. The countable symbolic universe $\U = \U(G, \G)$ is the set of all such world-states.
\end{definition}
The set $G$ defines the geometry of the universe. For example, for TMs, $G = \Z$ represents tape positions, and world-states $w : \Z \to \mathcal{S}$ correspond to tape configurations. 
We now define a computational agent as an entity interacting with a symbolic universe. Let $\mathcal{P}(\cdot)$ denote powersets.

\begin{definition}[Computational Agent] \label{def:agent}
    A computational agent operating in a symbolic universe $\U(G, \G)$ constitutes a triple $M = (Q, \U, \d)$, where $Q$ is a finite set of states and $\d : Q \times \mathcal{P}(G) \times \U \to Q \times \mathcal{P}(G) \times \U$ is a computable transition function with perception restrictions given below.
\end{definition}
The transition function $\d$ determines how the agent evolves:
$\d(q, P, w) = (q', P', w')$, where $q \in Q$ is the current state, $P \subseteq G$ is the agent’s perception (the observed region of the universe), and $w \in \U$ is the current world-state. The agent updates the internal state ($q'$), shifts perception ($P'$), and modifies the environment ($w'$). Notably, $\d$ can only access $w_{|P}$, i.e., the portion of the world within the field of perception. For a complete interaction trace, we specify an initial state $q_0 \in Q$, an initial perceived region $P_0 \subseteq G$, and a final state $q_f \in Q$ as a halting condition. Since we model the environment as static and the agent follows a fixed transition function, the system defines a \textbf{symbolic dynamical process} $\d : \Lambda \to \Lambda$, where $\Lambda = Q \times \mathcal{P}(G) \times \U$.

\textit{Examples of Computational Agents.}  
Many computational models fit this framework. 
TMs, RNNs, and transformers all operate on symbolic universes with geometry $G = \Z$. For these models, the symbol-space $\G$ corresponds to the tape alphabet, input-output symbols, or token set. TMs serve as the prototypical \textit{universal} computational agents, with a continuously moving singleton perception. RNNs and transformers (under realistic constraints such as finite-precision weights and finite context windows) reduce to complex finite-state transducers. Humans may also function as computational agents when performing structured tasks without modifying their neural architecture.

\subsection{Algorithmic Learning Theory} \label{subsec:ALT}

We formalize the learning-in-the-limit framework, focusing on observations and characteristic sets.

\subsubsection{Learning in the Limit with Input Sources and $\a$-Observations} 
We extend Gold's learning-in-the-limit framework by incorporating a family of computational models, a restricted input source, and a broader notion of natural observations.

Our generalization introduces the set of \textbf{computational models} $\M_\D$ over an enumerable domain $\D$ which compute the general recursive functions in $G_\D$. For any subset $\M \subseteq \M_\D$, let $\F_\M \subseteq G_\D$ denote the set of functions computed by $\M$, and for any $M \in \M_\D$, let $f_M : \D \partialto \D$ denote the function computed by $M$. We aim to learn the \textbf{concept class} $\F_\M \subseteq G_\D$.

\textit{Information Presentation.} Given a ground-truth model $M \in \M$ from a \textbf{model class} $\M \subseteq \M_\D$, the learning algorithm $\L$ observes examples from an \textbf{input source} $I \subseteq \D$ such that \textbf{inputs} $x$ only arrive from the set $I_M \coloneqq I \cap D_M$ and produce observable \textbf{examples} $\eta_M(x) = (x, f_M(x), \a(M, x))$ where $\a(M, x)$ contains \textbf{$\a$-observations} about the computation of $f_M(x)$. For a sample set $S \subseteq I_M$, we denote the observed examples by $E_M(S) = \{\eta_M(x) : x \in S\}$. We define the equivalence class $[M]_S = \{T \in \M : E_T(S) = E_M(S)\}$ of indistinguishable models on $S$. Now, an adversary orders the examples using a \textbf{surjective ordering} $w : \N \surj I_M$, so that at time $t$, the learner receives $\eta_M(w_t)$. The learner produces a hypothesis $R_t = \L(\eta_M(w_1), \dots, \eta_M(w_t))$, where $R_t$ belongs to a computable \textbf{set of representations} $\RR$. Through a \textbf{simulation function} $\mathfrak{S} : \RR \to G_\D$, the representation $R_t$ generates a general recursive\footnote{Since $\CC_\D$ is not recursively enumerable, $\RR$ might include representations of some non-total recursive functions.} hypothesis $\mathfrak{S}(R_t)$.
\begin{definition}[The $(\M, \a)$--LP] \label{def:LP}
    For a model class $\M \subseteq \M_\D$, we say that a learning algorithm $\L$ solves the $(\M, \a)$--learning problem\footnote{Equivalently, $\L$ learns $\F(\M)$ in the limit from $\a$-observations of $\M$ and restricted inputs.} if for all models $M \in \M$, all input sources $I_M \subseteq D_M$, and all surjective orderings $w : \N \surj I_M $ of $E_M(I_M)$ from $\a$-observation, after some finite time $t^\star = t^\star(M, I, w)$, $\L$ converges to a correct representation $\hat{R} = R_t, \ \forall t \geq t^\star$ s.t. $\mathfrak{S}(\hat{R})_{|I_M} = f_{M|I_M}$.
\end{definition}
The $(\M, \a)$--LP framework generalizes Gold's learning-in-the-limit setup as follows. \textbf{(a)} Introduction of computational models and $\a$-observations: Learning now depends on an unknown model $M$ computing $f_M$ whose computation provides additional structured information. \textbf{(b)} Restriction on input sources: Observations may now come only from a subset $I$ of the full input domain $\D$.

Notice that if $\M \subseteq \M'$, then a learner $\L$ solving the $(\M', \a)$--LP will also solve the $(\M, \a)$--LP. Hence, the hardest learning-in-the-limit setting becomes the unrestricted $(\M_\D, \a)$--LP where the learner lacks prior knowledge about the ground-truth process that could reduce the complexity of the hypothesis space. Now, the set of all possible models $\M_\D$ acting on $\D$ includes a multitude of different computational species, which makes the existence of a unified learning strategy all the more surprising. The motivation for this learning setup comes from envisioning a robot trying to learn a computational task from observations. The robot observes and understands the domain of action $\D$ but lacks information about the architectural specifics of the black-box model performing the task. Moreover, only a restricted subset $I$ of the domain $\D$ might produce relevant inputs to the target task.

\subsubsection{Characteristic Sets}
The $(\M, \a)-$LP provides a formal criterion which guarantees genuine learning robust to example reorderings and input-source restrictions. However, once a learning algorithm $\L$ solves the $(\M, \a)-$LP, we would also like to quantify $\L$'s data efficiency under favorable curricula. Formalized by \citet{de1997characteristic}, the characteristic-sets framework provides such a characterization. We adapt the original definition to incorporate $\a$-observations and input restrictions.

\begin{definition}[Characteristic Sets] \label{def:characteristic}
    Let $\M \subseteq \M_\D$ and let $\L$ solve the $(\M, \a)$--LP. Characteristic sets exist relative to $\L$, $\M$, $\a$, and $I \subseteq \D$. We define $S_M \subseteq I \cap D_M$ as a characteristic set for $M \in \M$ if $\L(E_M(S))$ computes the same function as $M$ restricted to $I \cap D_M$ whenever $S \supseteq S_M$.\footnote{Notice that an effective curriculum for learning the function of $M$ with $\L$ and $\a$ would prioritize inputs from $S_M$.}
\end{definition}

Intuitively, characteristic sets form a core set of inputs that enable efficient model identification. We measure the complexity of $S_M$ in two key ways. First, we define the \textbf{mass} of $S_M$ as
$\n{S_M} = \sum_{s \in S_M} \texttt{length}(x, f(x), \a(M, x))$,
which quantifies the  size of the data $\L$ must process. Second, we define the \textbf{size} of $S_M$ as $|S_M|$. Then, a model class $\M$ is \textbf{$\a$-restrictively identifiable in polynomial time and data} ($\a$-RIPTD) if there exists a polynomial-time learner $\L$ for which every $M \in \M$ admits a characteristic set with polynomial mass $\n{S_M} = \texttt{poly}(|M|)$ regardless of the input source $I$. For $\a$-RIPTD model classes, the complexity of representations determines the hardness of learning. \citet{gold1978complexity} showed that the FST model class $\Phi_A^B$ is $\eps$-IPTD (only input-output observations and fixed $I = A^+$). In contrast, in \Cref{sec:CoT}, we prove that $\Phi_A^B$ is not $\eps$-RIPTD. Thus, requiring identifiability for a variable input source makes learning considerably harder.

Note that the mass and size of characteristic sets can only increase when expanding the model class $\M$ or reducing the informational content of $\a$. In particular, if $\M$ is not $\a$-RIPTD, then any $\M' \supseteq \M$ is also not $\a$-RIPTD. We conclude with the following observation proved in \Cref{app:sec:c-sets}.

\begin{restatable}[Distinguishability]{lemma}{distinguish} \label{lem:distinguish}
    Let $\L$ solve the $(\M, \a)$--LP. Suppose that characteristic sets exist relative to $\L, \M, \a, \text{ and } I \subseteq \D$. Let $M, M' \in \M$ compute two distinct functions when restricted to $I$. Then, there exists $x \in S_M \cup S_{M'}$ such that either $x \notin D_M \cap D_{M'}$ or $\eta_M(x) \neq \eta_{M'}(x)$.
\end{restatable}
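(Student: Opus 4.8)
The plan is to argue by contraposition. I would set $S \coloneqq S_M \cup S_{M'}$ and assume the negation of the conclusion: every $x \in S$ satisfies both $x \in D_M \cap D_{M'}$ and $\eta_M(x) = \eta_{M'}(x)$. The first clause guarantees $S \subseteq D_M \cap D_{M'}$, so that the example sets $E_M(S)$ and $E_{M'}(S)$ are simultaneously well-defined (each $\eta_M(x)$, resp.\ $\eta_{M'}(x)$, requires $x \in D_M$, resp.\ $x \in D_{M'}$); the second clause then forces $E_M(S) = \{\eta_M(x) : x \in S\} = \{\eta_{M'}(x) : x \in S\} = E_{M'}(S)$.

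Next I would exploit that $\L$ is a deterministic function of its observed example set, so identical inputs yield identical outputs: $R \coloneqq \L(E_M(S)) = \L(E_{M'}(S))$. Since $S \supseteq S_M$ and $S \supseteq S_{M'}$, the defining property of characteristic sets (\Cref{def:characteristic}) applies on both sides simultaneously to the single representation $R$: it must make $\mathfrak{S}(R)$ compute $f_M$ restricted to $I \cap D_M = I_M$ and, at the same time, compute $f_{M'}$ restricted to $I \cap D_{M'} = I_{M'}$. In particular $\mathfrak{S}(R)_{|I_M} = f_{M|I_M}$ and $\mathfrak{S}(R)_{|I_{M'}} = f_{M'|I_{M'}}$.

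From here I would derive $f_{M|I} = f_{M'|I}$, contradicting the hypothesis that $M, M'$ compute distinct functions on $I$. The values agree automatically on the common domain $I_M \cap I_{M'}$, where both coincide with $\mathfrak{S}(R)$. The one place requiring care is the domain: two partial functions on $I$ can differ solely by being defined on different sets, and the value-agreement argument alone does not rule this out. I would close this gap using the full strength of the characteristic-set guarantee, namely that $\mathfrak{S}(R)$ \emph{is} $f_M$ restricted to $I_M$ -- so $\mathrm{dom}(\mathfrak{S}(R)) \cap I = I_M$ -- and likewise \emph{is} $f_{M'}$ restricted to $I_{M'}$ -- so $\mathrm{dom}(\mathfrak{S}(R)) \cap I = I_{M'}$. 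Equating the two descriptions yields $I_M = I_{M'}$, and combined with the value agreement this gives $f_{M|I} = f_{M'|I}$, the desired contradiction.

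The main obstacle is precisely this domain-matching step: it is tempting to check only agreement on the shared domain, but the distinctness of $f_{M|I}$ and $f_{M'|I}$ may be witnessed entirely by a point where one model halts and the other diverges. The proof therefore hinges on reading the characteristic-set property as pinning down not merely the values but also the domain of definition of the learned hypothesis over $I$; this is what forces $I_M$ and $I_{M'}$ to coincide once a common representation $R$ is produced. A secondary point worth stating explicitly is the well-definedness of $E_M(S)$ and $E_{M'}(S)$, which is exactly why the disjunct $x \notin D_M \cap D_{M'}$ must appear in the conclusion alongside $\eta_M(x) \neq \eta_{M'}(x)$.
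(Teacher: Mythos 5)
Your proof is correct and follows essentially the same argument as the paper's: assume the negation, observe that $E_M(S) = E_{M'}(S)$ for $S = S_M \cup S_{M'}$, and apply the characteristic-set property of both $S_M$ and $S_{M'}$ to the single representation $\L(E_M(S))$ to contradict the distinctness of $f_{M|I}$ and $f_{M'|I}$. Your explicit treatment of the domain-matching subtlety (that the two partial functions might differ only in where they are defined) is actually more careful than the paper's own one-line conclusion, which glosses over this case.
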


We mention in passing that learning efficiency in other frameworks implies the existence of learning-in-the-limit algorithms with small characteristic sets. For instance, a valid $T/L$ pair from \citep{goldman1993teaching} leads to the existence of computable characteristic sets from model representations, and semi-polynomial $T/L$-teachability implies IPTD (see \citep{de1997characteristic}). Moreover, polynomial-mass/size example-based algorithms (like Angluin's $L^\star$ algorithm \citep{angluin1987learning}) lead to algorithms with polynomial-mass/size characteristic sets due to \citep[Proposition 1]{de1997characteristic} + \citep[Theorem 2]{goldman1993teaching}.





\section{Learning from Input-Output Observations} \label{sec:blackbox}

In this original formulation of learning in the limit, the learner $\L$ observes only inputs and outputs, with no additional information about the computation process. Thus, $\a(\cdot, \cdot) \equiv \eps$, and the model family $\M$ contributes nothing beyond function evaluations. We refer to this setting as the $\mathbf{\F_\M}$\textbf{--LP}.

For the input-output learning setup, \citet{gold1967language} proved the following negative result through a diagonalization argument that forces any fixed learner to switch between hypotheses infinitely often.

\begin{theorem}[Gold, Theorem~I.5]
    No algorithm learns the class of computable functions in the limit from input-output observations. That is, for a finite alphabet $\Sigma$, no algorithm solves the $\CC_\SS$--LP.
\end{theorem}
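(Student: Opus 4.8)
The plan is to argue by contradiction with a diagonalization that forces an assumed learner to change its hypothesis infinitely often on a single total recursive function. Suppose some algorithm $\L$ solves the $\CC_\SS$--LP. I would fix the canonical enumeration $x_0, x_1, \dots$ of the domain $\SS$ and present examples in this order (this is an admissible surjective ordering in the sense of \Cref{def:LP}, since a total function has $D_M = \SS$). I would then build an adversarial target $f \in \CC_\SS$ in stages, using $\L$ itself as a computable subroutine so that the construction stays effective; consequently $f$ is genuinely total recursive and lies in the concept class, yet $\L$, fed the graph of $f$ in canonical order, never stabilizes on one representation. The whole difficulty is to keep $f$ total while guaranteeing non-convergence.

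Each stage maintains a finite example prefix $\sigma$ encoding $f$ on an initial segment $x_0, \dots, x_{k-1}$ together with the current hypothesis $p = \L(\sigma) \in \RR$; the stage's goal is to produce a consistent proper extension $\sigma'$, defining $f$ on one or more further inputs, with the \emph{observable} property $\L(\sigma') \neq p$, i.e. an explicit mind change. I would pursue two tactics. The first simply appends the output value on $x_k, x_{k+1}, \dots$ as zeros and waits for $\L$ to switch away from $p$. The second diagonalizes against the hypothesis: if $\mathfrak{S}(p)(x_k)$ halts with value $v$, set $f(x_k) = v+1$, making $p$ inconsistent with the observed data, and then append zeros until $\L$ abandons the now-incorrect $p$.

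The main obstacle is that $p$ need only generate a \emph{general} recursive (possibly partial) function $\mathfrak{S}(p)$, so I cannot decide whether $\mathfrak{S}(p)(x_k)$ halts and cannot afford to block on it without risking that $f$ never becomes total. I would resolve this by dovetailing the two tactics — searching over $m$ for $\L(\sigma \cdot 0^m) \neq p$ in parallel with searching for a halting computation of $\mathfrak{S}(p)(x_k)$ — and committing to whichever fires first. The key lemma is that at least one must fire: if appending zeros never triggers a mind change, then $\L$ converges to $p$ on the (recursive) all-zero extension of $\sigma$, so by the assumed success of $\L$ we must have $\mathfrak{S}(p)$ equal to that extension, whence $\mathfrak{S}(p)(x_k) = 0$ halts and the diagonalizing branch succeeds (and, after further zeros, forces $\L$ off $p$, since $\L$ must eventually converge correctly on the recursive zero-extension of $\sigma \cdot (x_k, v{+}1)$, which $p$ does not compute). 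Thus every stage halts, strictly extends the defined region of $f$, and records one observed mind change. Letting the stages run forever yields a total recursive $f$ whose canonical presentation drives $\L$ through infinitely many mind changes, so $\L$ cannot converge on $f$ — contradicting that it solves the $\CC_\SS$--LP. I expect the delicate bookkeeping to be verifying that each per-stage dovetail is genuinely effective (every test $\L(\cdot) \neq p$ terminates because $\L$ outputs a representation on each finite input sequence) and that the resulting $f$ is defined on all of $\SS$ because each stage commits $f$ on at least the next input $x_k$.
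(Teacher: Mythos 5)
Your proposal is correct and follows exactly the approach the paper attributes to Gold: a diagonalization, using the assumed learner as a computable subroutine, that constructs a total recursive $f$ on which the learner is forced through infinitely many mind changes (the paper does not reprove Theorem~I.5 but cites it and describes precisely this argument). Your dovetailing of the zero-extension search against the halting search for $\mathfrak{S}(p)(x_k)$, justified by the observation that non-firing of the first branch forces $\mathfrak{S}(p)$ to be the total zero-extension, is the standard and correct way to keep every stage effective and $f$ total.
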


Interestingly, we prove that restricting the time complexity of the concept class enables learnability in the limit from input-output observations and arbitrary input sources. Now, for an enumerable domain $\D$, let $\texttt{size}_\D : \D \to \N$ represent some canonical measure of the complexity of the input known to the learner $\L$. We introduce the following parametrized complexity classes.
\begin{definition}[General Complexity Classes] \label{def:complexity}
    For a model class $\M \subseteq \M_\D$ over an enumerable set $\D$ and for a computable monotonically increasing function $Q : \N \times \N \to \N$, let $\mathbf{TIME}_\M(Q(c, n))$ denote the subset of general recursive functions $f \in \F_\M$ for which there exists a computational model $M \in \M$ -- not necessarily a TM -- that computes $f$ on halting inputs $x \in \D_f$ of $\texttt{size}_\D(x) = n$ in $O(Q(c, n))$ steps. We define the following complexity class $\mathbf{Q}(\M) = \bigcup_{c \in \N} \mathbf{TIME}_\M(Q(c, n))$.
\end{definition}
Now, under a relaxed form of the Extended Church-Turing Thesis, we prove that the concept class $\mathbf{Q}(\M_\D)$ is learnable in the limit from input-output observations and arbitrary input sources.

\begin{restatable}[Time-Restricted IOO-Learning]{theorem}{blackbox} \label{thm:blackbox}
    Assuming the $q$-ECTT (\ref{ass:q-ECTT}) holds, for an enumerable set $\D$ and a computable monotonically increasing function $Q : \N \times \N \to \N$, there exists an algorithm which learns the $Q-$time-bounded general recursive functions over $\D$ from input-output observations and restricted inputs. Formally, there exists an algorithm solving the $\mathbf{Q}(\M_\D)$--LP.
\end{restatable}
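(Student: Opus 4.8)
The plan is to run a clocked variant of identification-by-enumeration, using the $q$-ECTT to convert the unknown ground-truth model into a Turing machine equipped with a \emph{known, computable} time bound that carries only an unknown constant. Fix the ground truth $M \in \mathbf{Q}(\M_\D,\D)$, so that $f_M$ is computed on halting inputs of $\texttt{size}_\D(x) = n$ in $O(Q(c^\star, n))$ steps by some model (not necessarily a TM) for an unknown $c^\star$. \Cref{ass:q-ECTT} supplies a constant $c'$ and a TM $T^\star \in \T$ simulating that model, so $T^\star$ computes $f_M$ on every halting input of size $n$ within $q\big(c', K\cdot Q(c^\star, n)\big)$ steps, where $K$ absorbs the $O(\cdot)$. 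Since $q$ and $Q$ are computable and monotone, setting $\mathrm{clk}(c, n) = \max_{a,b,k \le c} q\big(a, k\cdot Q(b, n)\big)$ yields a computable, monotone \emph{clock} for which some constant $c^{\star\star}$ (namely any $c^{\star\star} \ge \max(c', K, c^\star)$) satisfies $\mathrm{clk}(c^{\star\star}, n) \ge q\big(c', K\cdot Q(c^\star, n)\big)$ for all $n$; hence $T^\star$ halts on every $x \in D_M$ within $\mathrm{clk}(c^{\star\star}, \texttt{size}_\D(x))$ steps and outputs $f_M(x)$.

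Next I would describe the learner. Fix an enumeration $(T_1, c_1), (T_2, c_2), \dots$ of $\T \times \N$ in which each pair occurs at a finite index, ordered by a complexity measure such as $|T_i| + c_i$. Call a pair $(T, c)$ \emph{consistent} with an observed example $(x, f_M(x), \eps)$ if simulating $T$ on $x$ for at most $\mathrm{clk}(c, \texttt{size}_\D(x))$ steps halts with output $f_M(x)$; if $T$ fails to halt within the clock or halts with a different value, the pair is inconsistent. Because $\mathrm{clk}$ is computable and $\texttt{size}_\D(x)$ is finite, this check is decidable by bounded simulation. At time $t$, after seeing $E_M(\{w_1, \dots, w_t\})$, the learner outputs $R_t = T_j$ for the least index $j$ whose pair is consistent with all $t$ examples. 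The search terminates because $(T^\star, c^{\star\star})$ occurs at some fixed index and is consistent with \emph{every} example drawn from $I_M \subseteq D_M$, so the least consistent index is always bounded by it.

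For convergence I would isolate the least index $j^\circ$ of a \emph{correct} pair, i.e.\ a pair $(T, c)$ for which $T$ halts within $\mathrm{clk}(c, \cdot)$ on all of $I_M$ and there agrees with $f_M$; such pairs are consistent with every observable example, so the least-indexed correct pair is never eliminated. Every pair $(T_i, c_i)$ with $i < j^\circ$ fails correctness, which means there is a witness $x_i \in I_M$ on which $T_i$ either disagrees with $f_M$ or exceeds the clock; in both cases the clocked simulation does not halt with value $f_M(x_i)$ within $\mathrm{clk}(c_i, \texttt{size}_\D(x_i))$ steps, so observing $\eta_M(x_i)$ renders $(T_i, c_i)$ permanently inconsistent. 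Since $w : \N \surj I_M$ is surjective, the finitely many witnesses $x_1, \dots, x_{j^\circ - 1}$ all appear by some finite $t^\star$; from then on the least consistent index equals $j^\circ$, so $R_t = T_{j^\circ}$ for all $t \ge t^\star$ and $\mathfrak{S}(T_{j^\circ})_{|I_M} = f_{M|I_M}$ by correctness. Consistency is only ever tested against observed inputs, all lying in $I_M$, which is exactly why the restricted-input requirement $\mathfrak{S}(\hat R)_{|I_M} = f_{M|I_M}$ (rather than global agreement) is satisfied.

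I expect the main obstacle to be the clock construction and its constant bookkeeping: the clock must simultaneously be \emph{loose enough} that the true machine $T^\star$ eventually falls under an enumerated pair $(T^\star, c^{\star\star})$ whose clock exceeds its true runtime (so the true pair is never wrongly rejected), and \emph{computable} so that the consistency check is decidable by finite simulation. The $\max$ over a growing finite parameter set above handles the former without sacrificing the latter, since both the simulation overhead $c'$ and the multiplicative constant $K$ hidden by $O(\cdot)$ get absorbed once $c$ is large enough. This clocked rejection is precisely what sidesteps Gold's impossibility result \citep{gold1967language}: without a time bound one can never refute a candidate TM that ``might still halt later,'' whereas the known bound furnished by the $q$-ECTT turns non-halting-within-time into a decisive, observable inconsistency.
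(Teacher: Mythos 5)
Your proposal is correct and follows essentially the same strategy as the paper's proof: a clocked learning-by-enumeration that uses the $q$-ECTT to obtain a TM halting within a computable time bound $q(c, c\cdot Q(c,n))$ for some unknown constant, rejects candidates by bounded simulation, and converges to the least-indexed consistent machine once finitely many witnesses have appeared. The only cosmetic difference is that you enumerate pairs $(T,c)$ directly while the paper folds the constant into a single incrementing counter $C$ shared with the enumeration index (and the paper makes explicit the encoding $\varphi:\D\to\SS$ needed to feed inputs to the simulated TMs).
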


The proof of \Cref{thm:blackbox}, delegated to \Cref{app:blackbox}, provides a learning-by-enumeration strategy which outputs the min-state TM simulating the ground truth. Since the number of $n-$state TMs in $\T$ is super-exponential in $n$, such an enumeration algorithm requires super-exponential effort in the size of the ground-truth representation to converge. 

Now, even if the $q$-ECTT fails, we can still salvage an important corollary if we stick to complexity classes over Turing machines: i.e., $\mathbf{Q}(\T)$. Then, the proof of \Cref{thm:blackbox} readily extends to this setting by simply replacing the encoding $\varphi$ with the identity and letting $q(c, n) = n$. Hence, popular complexity classes like $\mathbf{P}, \mathbf{NP}, \mathbf{\#P}, \mathbf{L},$ and $\mathbf{EXP}$ become learnable in the limit.

\begin{corollary}
    There exists an algorithm for learning any parametrized complexity class of general recursive functions in the limit from input-output observations and restricted inputs. Formally, for any finite problem alphabet $\Sigma$, tape alphabet $\G \supseteq \Sigma \sqcup \{\l\}$, and computable monotonically increasing function $Q : \N \times \N \to \N$, there exists an algorithm solving the $\mathbf{Q}(\T)$--LP.
\end{corollary}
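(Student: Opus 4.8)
The plan is to reuse, essentially verbatim, the identification-by-enumeration learner behind \Cref{thm:blackbox}, after observing that the two appeals to the $q$-ECTT in that proof become vacuous once the model class is $\T$ and the domain is $\SS$. \Cref{ass:q-ECTT} is needed there only (i) to produce a reasonable encoding $\varphi : \D \to \SS$ that lets a TM operate on a foreign domain $\D$, and (ii) to guarantee that some $T \in \T$ simulates the ground-truth model $M$ with overhead bounded by $q(c, t)$. For the class $\mathbf{Q}(\T, \SS)$ both are free: the domain already equals $\SS$, so I take $\varphi = \mathrm{id}$, and the ground-truth model is itself a TM, so it ``simulates itself'' step-for-step and $q(c, t) = t$ suffices. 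With these substitutions the premises of \Cref{ass:q-ECTT} hold unconditionally, and the learner no longer references the thesis.

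Concretely, I would enumerate \emph{clocked} machines: to each triple $(T, c, K) \in \T \times \N \times \N$ associate the TM $\langle T, c, K\rangle$ that, on input $x \in \SS$, runs $T$ for at most $K \cdot Q(c, \texttt{size}_\SS(x))$ steps, returns $T(x)$ if $T$ halts in time with a valid output, and is undefined otherwise. Since $Q$ and $\texttt{size}_\SS$ are computable, the clock is computable, each $\langle T, c, K\rangle$ is a legitimate element of $\RR$ with general recursive semantics $\mathfrak{S}(\langle T, c, K\rangle)$, and the predicate ``$\langle T, c, K\rangle$ halts on $x$ with output $y$'' is decidable because the clocked run always terminates. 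Fixing an effective enumeration of the triples ordered by, say, $|T| + c + K$ with lexicographic tie-breaking, the learner $\L$ at time $t$ outputs the first clocked machine consistent with every observed example $(w_i, f_M(w_i))$, $i \leq t$ --- i.e.\ that halts on each $w_i$ and returns $f_M(w_i)$. One may bias the ordering toward small $|T|$ to recover the min-state guarantee mentioned for \Cref{thm:blackbox}.

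For correctness, membership $f_M \in \mathbf{Q}(\T, \SS)$ supplies a witness TM $T^\star$, a parameter $c^\star$, and an $O$-constant $K^\star$ for which $\langle T^\star, c^\star, K^\star\rangle$ agrees with $f_M$ on all of $D_{f_M} \supseteq I_M$; this machine is consistent with every example forever and sits at a fixed finite position. As a machine consistent with more data is consistent with less, the consistent set shrinks monotonically, so the index of the first consistent machine is non-decreasing and bounded by that of $\langle T^\star, c^\star, K^\star\rangle$, hence eventually constant from some $t^\star$ on. The limiting machine is consistent with all presented examples, and surjectivity of $w : \N \surj I_M$ forces every $x \in I_M$ to appear, so the limit computes $f_M$ on all of $I_M$, giving $\mathfrak{S}(\hat R)_{|I_M} = f_{M|I_M}$.

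The convergence bookkeeping is standard; the point demanding genuine care is \emph{dovetailing over the hidden resources}. The complexity index $c$ and the multiplicative constant $K$ are unknown, and for a general monotone $Q$ (e.g.\ $Q(c,n) = n + c$) enlarging $c$ cannot absorb a multiplicative slowdown, so the enumeration must range over $K$ as an independent parameter. The clock must be built so that any pair dominating $(c^\star, K^\star)$ still yields a consistent machine --- which is exactly what keeps the first-consistent index bounded --- while timeouts are recorded as ``undefined'' rather than as spurious outputs, so that consistency stays decidable and the ground truth's partiality on $\D \setminus D_{f_M}$ causes no trouble.
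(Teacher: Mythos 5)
Your proposal is correct and follows essentially the same route as the paper: the corollary is obtained from \Cref{thm:blackbox} by taking $\varphi = \mathrm{id}$ and $q(c,n) = n$, and then running the same clocked learning-by-enumeration that returns the first machine consistent with the observed examples. Your only cosmetic deviation is dovetailing over triples $(T,c,K)$ rather than the paper's single counter $C$ bounding both the enumeration prefix and the time budget $q(C, C\cdot Q(C,n))$ (which already absorbs the multiplicative constant you worry about); the two bookkeeping schemes are interchangeable.
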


\section{Learning from Time-Bound Observations.} \label{sec:clock}

Given a model class $\M \subseteq \M_\D$ over an enumerable domain of computation $\D$, time-bound observations provide the learner $\L$ not only with input-output information $(x, f_M(x))$ about the ground-truth model $M \in \M$ but also with an approximate upper bound $\atb(M,x) \in \N$ on the number of discrete computation steps taken by $M$ to compute $f_M(x)$. In other words, if we denote with $t_M(x)$ the number of computation steps $M$ takes on a halting computation trajectory, then there exists some constant dependent on $M$ such that $t_M(x) = O_M(\atb(M,x)), \ \forall x \in D_M$.

To motivate the practical relevance of such observations, we could consider a robot with restricted access to the computation of some model $M$. Perhaps, the robot only observes a terminal that once provided with an input $x$, returns $f_M(x)$ after a certain amount of time. In such information-poor scenarios, the robot could still reference its internal clock and measure the number of seconds $\atb(M, x)$ between entering $x$ in the terminal and observing $f_M(x)$. If each discrete computation step of $M$ takes $1/p$ seconds, then $M$ will compute $f_M(x)$ in at most $\ceil{p\atb(M,x)}$ steps.


In the following universal learning theorem, the $q$-ECTT allows us to translate the abstract computational models of $\M_\D$ into concrete TM representations. Furthermore, time-bound observations serve a similar role to that of restricting the complexity class of the target function as in our IOO-learning results.

\begin{restatable}[Universal TBO-Learning]{theorem}{clock}\label{thm:clock}
    Assuming the $q$-ECTT (\ref{ass:q-ECTT}) holds, for an enumerable set $\D$, there exists an algorithm for learning the class of general recursive functions $G_\D$ from time-bound observations of arbitrary computational models and input sources. Formally, there exists an algorithm solving the $(\M_\D, \atb)$--LP.
\end{restatable}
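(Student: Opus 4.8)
The plan is to build a \emph{learning-by-enumeration} algorithm whose consistency test is rendered decidable by the time-bound observations, thereby circumventing the halting obstruction behind Gold's negative result. The crucial point is that, although deciding whether a candidate machine reproduces $f_M$ on an input is undecidable with pure input--output observations, the bound $\atb(M,x)$ furnishes a concrete computational budget that turns consistency checking into a terminating procedure: run the candidate for the allotted number of steps and inspect whether it has halted with the correct output. This parallels the clocked enumeration behind \Cref{thm:blackbox}, except that here the per-input budget replaces a uniform complexity bound and so reaches all of $G_\D$.

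First I would fix a natural, TM-computable and invertible encoding $\varphi : \D \to \SS$, known to the learner since it understands the domain $\D$. By the $q$-ECTT (and the accompanying assumption that natural encodings trigger it), there exist a Turing machine $T^\star \in \T$ and a constant $c^\star \in \N$ with $f_{T^\star}(\varphi(x)) = \varphi(f_M(x))$ for all $x \in D_M$, where $T^\star$ halts within $q(c^\star, t_M(x))$ steps; combining $t_M(x) = O_M(\atb(M,x))$ with the monotonicity of $q$ yields a further constant $K^\star \in \N$ so that $T^\star$ halts on $\varphi(x)$ within $q(c^\star, K^\star \cdot \atb(M,x))$ steps for every $x \in D_M$. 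The triple $(T^\star, c^\star, K^\star)$ is thus a \emph{witness}. The learner $\L$ enumerates all triples $(T,c,K) \in \T \times \N \times \N$ effectively and, at each time $t$, outputs (a representation of) the earliest triple consistent with the examples so far, where $(T,c,K)$ is consistent on $\eta_M(w_i) = (w_i, f_M(w_i), \atb(M,w_i))$ iff simulating $T$ on $\varphi(w_i)$ for $q(c, K\cdot\atb(M,w_i))$ steps halts with output $\varphi(f_M(w_i))$. Because $q$ is computable and the budget finite, this test always terminates; the representation encodes the pipeline $x \mapsto \varphi^{-1}(f_T(\varphi(x)))$, which $\mathfrak{S}$ maps to a well-defined general recursive hypothesis, and a triple failing the test on any observed input is discarded forever since later examples only add constraints.

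For convergence I would exploit that the witness occupies a fixed position $j^\star$ in the enumeration and is consistent on all of $I_M$, hence is never discarded, so $\L$'s output always lies among the first $j^\star$ triples. Each of these finitely many triples is either eventually inconsistent on some $x \in I_M$---eliminated at a finite time once that $x$ is presented, which surjectivity of $w$ guarantees---or consistent on all of $I_M$. Letting $t^\star$ be the latest such elimination time, for $t \ge t^\star$ the consistent triples among the first $j^\star$ form a fixed set, so $\L$ stabilizes on a single triple $(T,c,K)$; consistency on all of $I_M$ gives $\varphi^{-1}(f_T(\varphi(x))) = f_M(x)$ for every $x \in I_M$, i.e. the limit hypothesis satisfies $\mathfrak{S}(\hat{R})_{|I_M} = f_{M|I_M}$, as \Cref{def:LP} demands.

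The main obstacle I anticipate is the unknown constants and encoding rather than the convergence bookkeeping: the $q$-ECTT guarantees a simulating machine and overhead constant without revealing their values, so the budget cannot be computed from the model and must instead be \emph{searched}, which is exactly why the enumeration ranges over triples and not machines alone. One must also verify that an undersized budget only ever renders a triple inconsistent (a timeout counts as failure) rather than spuriously consistent, so that no incorrect machine can impersonate the witness, and that any triple surviving every test genuinely computes $f_M$ on $I_M$ rather than merely on the finite observed sample.
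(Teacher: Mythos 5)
Your proposal is correct and follows essentially the same route as the paper: a learning-by-enumeration algorithm whose consistency test is made decidable by clocked simulation with budget $q(\cdot, \cdot\,\atb(M,x))$, with the $q$-ECTT supplying a witness TM that is never discarded. The only cosmetic difference is that you dovetail over explicit triples $(T,c,K)$ whereas the paper folds the machine index and the overhead constant into a single counter $C$ that it increments whenever no candidate survives; the two bookkeeping schemes are interchangeable.
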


The proof of \Cref{thm:clock}, deferred to \Cref{app:clock}, mimics the learning-by-enumeration technique we developed for input-output learning. Again, the learner $\L$ which solves the $(\M_\D, \atb)$--LP returns a consistent min-state model and requires super-exponential runtime to converge. Now, even if the $q$-ECTT fails empirically, we can still prove that the model class of Turing machines $\T$ is learnable in the limit from time-bound observations. This corollary follows by replacing the encoding $\varphi$ with the identity function and setting $q(c, n) = n$ in the proof of \Cref{thm:clock}.

\begin{corollary}
    There exists an algorithm for learning the class of general recursive functions in the limit from time-bound observations of Turing machines on restricted inputs. Formally, there exists a learning algorithm solving the $(\T, \atb)$--LP.
\end{corollary}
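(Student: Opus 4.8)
The plan is to specialize the enumeration learner from the proof of \Cref{thm:clock} to the case $\M = \T$ and $\D = \SS$, and to observe that in this regime the $q$-ECTT (\Cref{ass:q-ECTT}) is no longer an external hypothesis but holds trivially. Recall that the proof of \Cref{thm:clock} invokes the ECTT for exactly two purposes: to fix a reasonable encoding $\varphi : \D \to \SS$ of the abstract domain into the TM problem alphabet, and to guarantee, for the unknown ground-truth model $M$ and some constant, a Turing machine simulating $M$ whose running time on $\varphi(x)$ is bounded by a computable function of the observed time bound $\atb(M,x)$. When $M$ is itself a Turing machine over $\SS$, both purposes are vacuous: I would take $\varphi = \mathrm{id}_\SS$ and $q(c, n) = n$, so that $M$ ``simulates itself'' with zero overhead. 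With these substitutions the single assumption of \Cref{thm:clock} is discharged without any appeal to the ECTT, and the remainder of that proof transfers essentially verbatim.

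Concretely, I would run the learner that dovetails over all pairs $(T, \tilde c) \in \T \times \N$, ordered primarily by the state count $|T|$ and secondarily by $\tilde c$ (with TM index as a final tie-break). At stage $t$, having seen examples $\eta_M(w_1), \dots, \eta_M(w_t)$ with $w_i \in I_M$ and $\eta_M(w_i) = (w_i, f_M(w_i), \atb(M, w_i))$, I would call a pair $(T, \tilde c)$ \emph{consistent} if for every $i \le t$ the machine $T$ halts on $w_i$ within $\tilde c \cdot \atb(M, w_i)$ steps and outputs $f_M(w_i)$. The learner then emits the state-minimal consistent pair found so far and returns its machine as the representation $R_t$. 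The crucial point, inherited from \Cref{thm:clock}, is that this consistency test is \emph{decidable}: the time bound converts the otherwise-undecidable question ``does $T$ halt on $w_i$?'' into a bounded simulation of at most $\tilde c \cdot \atb(M, w_i)$ steps.

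For correctness I would argue convergence exactly as in \Cref{thm:clock}. The pair $(M, c_M)$, where $c_M$ is the model-dependent constant hidden in $t_M(x) = O_M(\atb(M,x))$, is consistent at every stage, so a state-minimal consistent pair always exists and its machine has at most $|M|$ states. Any candidate $T$ computing a function that differs from $f_{M|I_M}$ is eventually and permanently eliminated: by surjectivity of the ordering $w : \N \surj I_M$, a distinguishing input $x \in I_M$ appears at some finite stage, and on that input every pair $(T, \tilde c)$ fails the consistency test --- either $T$ overruns its budget or it halts with the wrong value --- for \emph{all} $\tilde c$ simultaneously (this is the mechanism behind \Cref{lem:distinguish}). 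Since only finitely many machines have at most $|M|$ states, after a finite stage $t^\star$ every wrong machine of size $\le |M|$ is gone, and the learner locks onto a fixed state-minimal machine $T^\star$ with $\mathfrak{S}(T^\star)_{|I_M} = f_{M|I_M}$, as required by \Cref{def:LP}.

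The subtlety I would flag --- the one place the argument needs care rather than pure transcription --- is the interaction between the transient use of time bounds and the permanence of the final hypothesis. A state-minimal \emph{correct} machine $T^\star$ may be arbitrarily slow, so the ratios $t_{T^\star}(x) / \atb(M,x)$ over $x \in I_M$ need not be uniformly bounded, and the certifying constant $\tilde c$ witnessing consistency of $T^\star$ may have to grow without bound as slower inputs arrive. This does not threaten learning in the limit, because the emitted representation $R_t = T^\star$ depends only on the machine and not on the certifying constant: the dovetailing always re-discovers a larger admissible $\tilde c$ for $T^\star$ at the next stage, so the state-minimal machine --- and hence $R_t$ --- stabilizes even though no single constant certifies $T^\star$ on all of $I_M$. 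Verifying that this unbounded growth of $\tilde c$ never forces an infinite oscillation of the output is the only genuinely load-bearing check beyond specializing the proof of \Cref{thm:clock}.
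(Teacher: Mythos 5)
Your high-level route is exactly the paper's: the corollary is obtained by rerunning the proof of \Cref{thm:clock} with $\varphi = \mathrm{id}_{\SS}$ and $q(c,n) = n$, so that \Cref{ass:q-ECTT} is discharged trivially and the ground-truth TM serves as its own simulator; the observation that the time bound makes the consistency test decidable is also the right one. The gap is in the final step, precisely at the place you flag and then wave away. Your learner orders candidate pairs by state count first and by the certifying constant $\tilde c$ second, and emits the machine of the minimal consistent pair. Nothing in your argument rules out the following: two distinct machines $T$ and $T'$ both attain the minimal state count among never-eliminated candidates, both compute $f_{M|I_M}$ correctly, and both have unbounded ratios $t_T(x)/\atb(M,x)$ and $t_{T'}(x)/\atb(M,x)$ over $I_M$. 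Then the minimal certifying constants $c_T(t) = \max_{i \le t}\lceil t_T(w_i)/\atb(M,w_i)\rceil$ and $c_{T'}(t)$ are two nondecreasing unbounded sequences, and an adversarial ordering $w$ can make them overtake each other infinitely often, so your learner outputs $T$ and $T'$ alternately forever. Both are correct, but \Cref{def:LP} demands convergence to a single fixed representation $\hat R$, so the LP is not solved. Your closing assertion that ``the state-minimal machine --- and hence $R_t$ --- stabilizes'' is exactly what fails: the minimal state count stabilizes, but the identity of the emitted machine need not, because you tie-break on a data-dependent, non-monotone quantity before tie-breaking on the index.

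The paper's learner is structured to preclude this. It couples the candidate index and the time constant into a single counter $C$: at each stage it finds the \emph{least} $C$ for which some $T_i$ with $i \le C$ computes the observed outputs within $C\cdot\atb(M,w)$ steps on all inputs seen, and returns the least-index such machine. The least valid $C$ is nondecreasing in $t$ and bounded above by $\max(j,\lceil p\rceil)$, where $M = T_j$ and $t_M(x) \le p\,\atb(M,x)$; hence it is eventually constant. For that terminal value of $C$ the valid set is a decreasing sequence of nonempty subsets of the finite set $\{T_1,\dots,T_C\}$, hence eventually constant, and the index tie-break is data-independent. These three facts together yield stabilization. To repair your version you need the analogous structure --- tie-break by a fixed enumeration index before $\tilde c$, or adopt the single-counter scheme outright --- rather than the unproved claim that re-discovering ever larger constants cannot induce oscillation.
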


\section{Learning from Policy-Trajectory Observations} \label{sec:CoT}

In the PTO framework, the learner $\L$ observes the full interaction of a computational agent $M=(Q, \U, \d)$ with its symbolic universe $\U(G, \G)$. As discussed in \Cref{subsec:CM}, this interaction forms a computable symbolic dynamical process $\d : \Lambda \to \Lambda$, where $\Lambda = Q \times \mathcal{P}(G) \times \U$. Importantly, the transition function $\d$ updates a triple $(q, P, w) \in \Lambda$ according to the current agential state $q$ and the information gleaned from the field of perception $w_{|P}$. We propose that the update $(P, w) \mapsto (P', w')$ should represent the observable part of the full interaction $(q, P, w) \mapsto (q', P', w')$. That is, we intend to decompose the interaction space $\Lambda$ into a hidden part $\Lambda_{H} = Q$ and an observable part $\Lambda_O = \mathcal{P}(G) \times \U$ such that $\Lambda = \Lambda_H \times \Lambda_O$.

The motivation for such a decomposition comes from human learning. For example, when a teacher ($M$) solves a quadratic equation ($f$), the student ($\L$) observes both the full blackboard ($w$) and the chalk interaction spots ($P$) as the teacher paces around and makes symbolic changes. More generally, humans seem quite capable of detecting where the perception of fellow humans (or other animals) falls, which allows us to make educated guesses about the inner lives of the observed. Of course, the closer the evolutionary distance to the observed (i.e., the more we instinctively understand about the agent family $\M$), the more educated the guess. Thus, knowledge of $\M$'s architectural specifics appears essential if we want to link external behavior to internal configurations. Accordingly, we will soon adopt the family of TMs $\T$ as our concrete object of study.

Now, since the learner $\L$ might lack the capacity to observe the full world-state $w$ (e.g., due to a very large blackboard) and since only the attended part of $w$ carries a learning signal, we further compress the observation $(P_t, w_t) \mapsto (P_{t+1}, w_{t+1})$ at time $t+1$ into the action tuple $\Delta_{t+1} = (w_{t|P_t}, w_{t+1|P_t}, D_{t+1})$, where $D_{t+1} = \text{``} P_{t+1} - P_t \text{''}$ denotes the shift in perception. Hence, we define a \textbf{policy-trajectory observation} for an agent $M$ as a sequence of action tuples $\tau = \{\Delta_t\}_{t=1}^L$.

\paragraph{Connection to Reinforcement Learning.} Here, we outline some superficial similarities between our PTO-Learning framework and Learning from Demonstration (LfD), also known as Apprenticeship Learning -- a subgenre of RL. For a longer discussion of LfD, we refer the reader to \citep{russell2016artificial, sutton2018reinforcement, correia2024survey}.

Now, the LfD framework often uses a POMDP as a foundation. The demonstration dataset $D = \{\tau_i\}_{i=1}^N$ consists of expert trajectories $\tau_i = (o_t^i, a_t^i)_{t=0}^{L_i}$ where $o_t^i$ belongs to a finite set of possible partial observations $\O$, $a_t^i$ belongs to a finite set of possible actions $\mathcal{A}$, and $L_i$ denotes the length of the interaction. Based on this demonstration data, the agent tries to develop a policy approximating the stochastic history-dependent expert policy $a_t^i \sim\pi^\star(o_0^i, \dots, o_t^i)$.

Similarly, for PTO-Learning, we can think of the agent $M$ as implementing a deterministic policy $\pi_M$ which takes as input a history of partial world-state observations $\mathfrak{h}_t=(w_{0|P_0}, \dots, w_{t|P_t}) \in \G^+$ and outputs an action pair consisting of a symbol $w_{t+1|P_t}$ and an attention shift $D_{t+1}$. If we let $\mathcal{S}$ denote the finite set of possible attention shifts, then the policy of $M$ becomes a partial function $\pi_M : \G^+ \partialto \G \times \mathcal{S}$. Interestingly, one could extend $\pi_M$ to a \textit{rational function} on the full domain $\G^+$ as we will see later on in the section. Now, if $M$ takes $t_M(x)$ steps to compute $f_M(x)$, then the policy-trajectory observation becomes
\begin{equation*}
    \apt(M, x) = (w_{t-1|P_{t-1}}, w_{t|P_t}, D_t)_{t=1}^{t_M(x)} = (w_{t-1|P_{t-1}}, \pi_M(\mathfrak{h}_{t-1}))_{t=1}^{t_M(x)} \in (\G^2 \times \mathcal{S})^{t_M(x)}.
\end{equation*}

\paragraph{TM Policy Trajectories.} 
We focus on policy-trajectory observations of TMs, making PTO-Learning a species-aware framework: The learner $\L$ knows $M \in \T$ and generates hypotheses using representations from $\T$. The additional information $\apt(M, x)$ consists of a sequence of tape manipulations
$\apt(M, x) = (\Delta_1, \dots, \Delta_{t_x}), \text{ where } \Delta_i = (\s_i, \s_i', D) \in \G^2 \times \{L, R, S\}$.
Each $\Delta_i$ records the TM’s $i^{\text{th}}$ operation: Reading $\s_i$, writing $\s_i'$, and moving $D$.

We proceed to introduce some useful notation. For a TM $T \in \T$, we denote by $T[x]$ and $T\{x\}$ the sequence of tape manipulations and the sequence of symbols scanned by $T$ on input $x$, respectively. If $T[x] = (\Delta_1, \dots, \Delta_{t_x})$ where $\Delta_i = (\s_i, \s_i', D) \in \G^2 \times \{L,R,S\}$, then $T\{x\} = (\s_1 \dots \s_{t_x}) \in \G^+$. Note that this notation extends to infinite sequences and that $T[x] = \apt(T, x)$. Furthermore, for a sample of inputs $S \subseteq \SS$, we overload the notation $T[S] = \{T[x] : x \in S\}$ and $T\{S\} = \{T\{x\} : x \in S\}$. We refer to $T[x]$ and $T[S]$ as \textbf{tape behavior}, and we denote with $[T]_S \subseteq \T$ the equivalence class of TMs with the same tape behavior on $S$ as $T$.

\paragraph{Connection to FSTs.} Interpreting $A = \G$ and $B = \G \times \{L, R, S\}$ as input/output alphabets, we define a mapping $\psi: \T \to \Phi_A^B$ that converts any TM $T \in \T$ into an FST $\psi(T) \in \Phi_A^B$ with the same transition diagram. Consequently, tape behavior $T[x]$ corresponds to a sequence of input-output observations $(T\{x\}, \psi(T)(T\{x\}))$ of the rational function $\g_{\psi(T)}$ corresponding to the semantics of $\psi(T)$. This reduces the $(\T, \apt)$--LP to learning the class of rational functions $\mathcal{P}_A^B$ in the limit from IOOs with a restricted source: i.e., the $\mathcal{P}_A^B$--LP.\footnote{This change requires extending Definition \ref{def:LP} to models with different input/output domains.} Notice that if $T$'s inputs come from $I \subseteq \SS$, then $\psi(T)$'s inputs come from $M\{I\} \subseteq \G^+$.  We summarize these findings in the following lemma. 
\begin{restatable}[Recursive-to-Rational Reduction]{theorem}{reduction} \label{thm:reduction}
    Learning the class of general recursive functions in the limit from PTOs reduces to learning the class of rational functions in the limit from IOOs. Formally, an algorithm solving the $\mathcal{P}_A^B$--LP with $A = \G$, $B = \G \times \{L,R,S\}$ will solve the $(\T, \apt)$--LP. In particular, there exists a learning-by-enumeration algorithm for the $\mathcal{P}_A^B$--LP.
\end{restatable}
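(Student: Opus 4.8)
The plan is to establish the two claims of the theorem separately: first the reduction from the $(\T, \apt)$--LP to the $\mathcal{P}_A^B$--LP, and then the existence of a learning-by-enumeration algorithm for the $\mathcal{P}_A^B$--LP. For the reduction, I would make precise the correspondence, already sketched in the preceding paragraphs, between TM tape behavior and input-output observations of a rational function. Concretely, given a learner $\L'$ solving the $\mathcal{P}_A^B$--LP, I would construct a learner $\L$ for the $(\T, \apt)$--LP as follows. Upon receiving a PTO $\apt(T, x) = (\Delta_1, \dots, \Delta_{t_x})$ with each $\Delta_i = (\s_i, \s_i', D_i)$, the learner $\L$ extracts the scanned-symbol sequence $T\{x\} = \s_1 \dots \s_{t_x} \in A^+$ and the induced output sequence $(\s_i', D_i)_{i=1}^{t_x} \in B^+$. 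By the definition of $\psi$, this pair is exactly $(T\{x\}, \psi(T)(T\{x\}))$, i.e.\ an input-output observation of the rational map $\g_{\psi(T)}$. The learner $\L$ then feeds these observations to $\L'$, obtaining an FST hypothesis, and applies $\psi^{-1}$ (which recovers a TM transition diagram from an FST over $A/B$) to produce a hypothesis in $\T$. The key verification is that the input source $I \subseteq \SS$ for $T$ translates into the input source $T\{I\} \subseteq A^+$ for $\psi(T)$, so that convergence of $\L'$ on $T\{I\}$-observations yields an FST agreeing with $\psi(T)$ on all of $T\{I\}$, which when pulled back through $\psi^{-1}$ yields a TM reproducing $T$'s tape behavior, hence $f_T$, on $I_T$.

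The main obstacle is precisely this pullback step, and I expect it to require the most care. The issue is that agreement of the learned FST with $\psi(T)$ on the restricted input set $T\{I\}$ only constrains the rational function along the scanned-symbol strings actually generated during $T$'s halting computations; off-trajectory transitions are unconstrained. I would argue that this suffices, because the tape behavior $T[x]$ is determined by the transitions exercised along $T\{x\}$, and these are exactly the strings in $T\{I_T\}$. Thus a TM whose transition diagram matches $\psi(T)$ on every prefix of every string in $T\{I_T\}$ will reproduce $\apt(T, x)$ for every halting input $x \in I_T$, and therefore compute $f_T$ correctly on $I_T$. A subtlety to address is that the learned FST may be a different, smaller transducer than $\psi(T)$ that merely agrees on the restricted source; I would note that this is acceptable under \Cref{def:LP}, since we only require $\mathfrak{S}(\hat{R})_{|I_T} = f_{T|I_T}$, not exact recovery of $T$. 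Another point worth checking is that the surjective ordering of $I_T$ induces a valid surjective ordering of $T\{I_T\}$ as required by the $\mathcal{P}_A^B$--LP, which follows since $x \mapsto T\{x\}$ is computable and the learner can enumerate observations accordingly.

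For the second claim, I would exhibit an explicit learning-by-enumeration algorithm for the $\mathcal{P}_A^B$--LP, mirroring the strategy used in \Cref{thm:blackbox} and \Cref{thm:clock}. Since the set $\Phi_A^B$ of FSTs over fixed finite alphabets $A, B$ is computably enumerable (one can list all $6$-tuples $(Q, A, B, \d, \g, q_0)$ by increasing state count), the learner maintains an enumeration $\psi_1, \psi_2, \dots$ of all FSTs and, at each time $t$, outputs the first FST in the enumeration consistent with all input-output observations seen so far, breaking ties by minimal state count. Consistency is decidable: for each observed pair $(u, v)$ with $u \in A^+$, one simply runs the candidate FST on $u$ and checks whether its output seq2seq map matches $v$. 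I would then argue convergence: because $\g_{\psi(T)}$ is a rational map with finitely many Nerode equivalence classes, some FST in the enumeration computes it; by the generalized Myhill-Nerode theorem any minimal such FST is eventually reached and is never discarded once all distinguishing observations arrive. The finiteness of $A$ and $B$ and the decidability of consistency checking make this enumeration effective, so the learner is a genuine algorithm. I would close by remarking that, as in the earlier theorems, this enumeration converges but requires effort super-exponential in the representation size of the target transducer, since the number of $n$-state FSTs over fixed alphabets grows super-exponentially in $n$.
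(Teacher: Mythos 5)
Your proposal is correct and follows essentially the same route as the paper: the reduction is handled via the correspondence $T[x] \leftrightarrow (T\{x\}, \psi(T)(T\{x\}))$ with input source $T\{I\}$ (which the paper treats as already established by the preceding discussion, while you spell out the pullback through $\psi^{-1}$ in more detail), and the learnability claim is proved by the same enumerate-and-return-the-first-consistent-FST algorithm, converging to the first transducer agreeing with the target on the restricted source. No gaps of substance.
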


The proof of \Cref{thm:reduction}, deferred to \Cref{app:cot}, shows the learnability of the $(\T, \apt)$--LP.

\paragraph{Uncomputable Characteristic Sets.}
We now state a negative result showing that no algorithm can learn the $(\T, \apt)$--LP or the $\mathcal{P}_A^B$--LP with bounded-mass characteristic sets. In particular, our theorem proves that input-source restrictions make learning much harder since for unrestricted inputs, the rational functions class $\mathcal{P}_A^B$ is IPTD \citep[Theorem 4]{gold1978complexity}.

\begin{restatable}[Unbounded Characteristic Sets]{theorem}{unbounded} \label{thm:unbounded}
    Fix a computable bounding function $\b : \T \to \N$ and let $I = \SS$. Then, for $\apt$-observations over the input source $I$, no algorithm achieves characteristic sets for $\T$ with $\b$-bounded mass. Furthermore, for $\apt$-observations over $I$, no algorithm for learning the class $\T$ with computable characteristic sets exists.
\end{restatable}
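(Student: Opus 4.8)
The plan is to prove both statements via a single diagonalization argument against any hypothetical learner with bounded-mass characteristic sets, crucially exploiting the input-source restriction. The key idea is that for a fixed bounding function $\b : \T \to \N$, a characteristic set $S_M$ with $\n{S_M} \leq \b(M)$ can contain only finitely many inputs of bounded total mass. Since the input source $I = \SS$ is unrestricted but characteristic sets may be restricted to arbitrary subsets $I_M \subseteq D_M$ (per \Cref{def:characteristic}, where the guarantee must hold for \emph{every} input source), I would construct a family of TMs that agree on all ``short'' inputs but differ on a carefully chosen long input. The Distinguishability Lemma (\Cref{lem:distinguish}) then forces any two TMs computing distinct restricted functions to be separated by some $x$ in the union of their characteristic sets, and I would engineer the construction so that this separating input must have mass exceeding $\b(M)$, yielding a contradiction.

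The main steps, in order, are as follows. First, I would fix an arbitrary computable $\b : \T \to \N$ and suppose toward contradiction that some learner $\L$ solves the $(\T, \apt)$--LP with $\b$-bounded-mass characteristic sets relative to the input source $I = \SS$. Second, I would build a sequence of TMs $\{T_n\}_{n \in \N}$ of roughly comparable representation size such that the tape behaviors $T_n[x]$ coincide for all inputs $x$ below a length threshold $\ell_n$, but $T_n$ and $T_m$ produce genuinely different policy trajectories (different $\apt$-observations) on some designated long input $x_n$ with $\texttt{length}(x_n, f(x_n), \apt(T_n, x_n))$ forced to exceed $\b(T_n)$. The third step applies \Cref{lem:distinguish}: since $T_n$ and $T_m$ compute distinct restricted functions, their characteristic sets' union must contain a distinguishing input, but the bounded-mass constraint confines $S_{T_n}$ to inputs whose total mass is at most $\b(T_n)$, so no separating input of the required mass fits, contradicting distinguishability. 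For the second statement (no computable characteristic sets), I would note that a computable assignment $M \mapsto S_M$ would in particular be bounded by the computable function $M \mapsto \n{S_M}$, reducing to the first part.

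The hard part will be the explicit TM construction in the second step: I must ensure simultaneously that (i) the $T_n$ have controlled representation complexity so that $\b(T_n)$ does not grow fast enough to accommodate the required distinguishing inputs, and (ii) the distinguishing behavior genuinely manifests in the \emph{policy trajectory} $\apt$ and not merely in the final output, since $\apt$ reveals the entire step-by-step tape manipulation and is therefore much more informative than plain IOOs. The delicate tension is that richer observations make models \emph{easier} to distinguish on any given input, so the construction must push the distinguishing input far enough out — beyond the mass budget $\b(T_n)$ — that the learner cannot afford to query it. I expect the cleanest route is to encode, inside each $T_n$, a ``delay'' gadget that replicates identical tape behavior up to the threshold $\ell_n$ and only then branches, with the branch point placed adversarially against the growth rate of $\b$; verifying that this gadget adds only modest overhead to $|T_n|$ relative to the forced mass $\b(T_n)$ is the crux of the argument.
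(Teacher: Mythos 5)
Your overall strategy is sound and genuinely different from the paper's. The paper proves the theorem by a reduction to the Halting Problem: for each TM $T$ and string $w$ it builds a pair $Y_{T,w}, N_{T,w}$ of linear-time machines whose $\apt$-observations coincide on an input $x$ unless $T$ halts on $w$ within $|x|$ steps, so that $\b$-bounded (or computable) characteristic sets, combined with \Cref{lem:distinguish}, would let one decide halting by simulating both machines on the finitely many inputs of length at most $\max(\b(Y_{T,w}),\b(N_{T,w}))$. Your route instead diagonalizes directly against $\b$: you build machines that agree (trajectory-wise) on all inputs short enough to fit inside the mass budget and disagree beyond it, then invoke \Cref{lem:distinguish} for an immediate contradiction. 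Both proofs pivot on the same lemma; the paper's version additionally yields the stronger conclusion that even the class of \emph{linear-time} TMs lacks computable characteristic sets, whereas yours, if completed, is more self-contained in that it needs no appeal to undecidability. Your reduction of the second claim to the first (a computable assignment $M \mapsto S_M$ induces the computable bound $M \mapsto \n{S_M}$) is correct and clean.

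The gap is exactly where you locate it, and as written it is not closed. If you hard-code the branch threshold $\ell_n$ into the finite control of $T_n$, then $|T_n|$ grows with $\ell_n$, $\b(T_n)$ may grow in response (recall $\b$ is an \emph{arbitrary} computable function of the representation, e.g.\ Ackermann-type in $|T_n|$), and you again need $\ell_n > \b(T_n)$ --- a circularity that no amount of ``modest overhead'' escapes for fast-growing $\b$. The standard repair is to store no threshold at all: use Kleene's recursion theorem (in its double form, since each machine must also know its twin's description) to build a pair $T_1, T_2$ sharing identical code that, at runtime, obtains the descriptions of both machines, computes $L = \max(\b(T_1), \b(T_2))$ on the tape, compares $|x|$ with $L$, and only then branches into the two differing transitions. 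Then $|T_1|$ and $|T_2|$ are fixed constants depending only on the code for $\b$, every $x \in S_{T_1} \cup S_{T_2}$ has length at most $L$, the two executions are step-for-step identical on all such inputs (so $\eta_{T_1}(x) = \eta_{T_2}(x)$ including the policy trajectories, and both machines are total), and \Cref{lem:distinguish} yields the contradiction. With that self-reference step made explicit, your argument goes through.
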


The proof of \Cref{thm:unbounded}, found in \Cref{app:cot}, uses a reduction to the Halting Problem and shows that even the model class of linear-time TMs cannot have computable characteristic sets.\footnote{Thus, no valid $T/L$ pair exists for the class of linear-time computable functions under policy-trajectory observations.} Hence, we strongly conclude that the representation complexity of TMs does not control the hardness of learning $G_\SS$ in the limit from policy-trajectory observations of $\T$. In light of the proof of \Cref{thm:unbounded}, one might argue that learning efficiency should be measured not by the \textit{mass} of characteristic sets $\n{S_M}$ but by their \textit{size} $|S_M|$ (see \Cref{app:cot}).\footnote{This consideration is a long-standing open question in grammatical inference. See \citep[Problem 1]{de2006ten}.}

As a small corollary, setting $\b$ as any polynomial implies that $\T$ is not $\apt$-IPTD. Consequently, $\T$ is also not IPTD from time-bound or input-output observations.
\begin{corollary}
    $\T$ is not IPTD from input-output, time-bound, or policy-trajectory observations.
\end{corollary}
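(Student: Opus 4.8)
The plan is to obtain this corollary as an immediate specialization of \Cref{thm:unbounded}, followed by a monotonicity argument that transfers the negative result from the most informative observation type (PTOs) to the weaker ones (TBOs and IOOs). Recall that the model class $\T$ is $\a$-IPTD precisely when some polynomial-time learner admits, for every $M \in \T$ over the fixed full input source $I = \SS$, a characteristic set whose mass $\n{S_M}$ is bounded by a fixed polynomial $p(|M|)$. The first step is simply to observe that such a polynomial bound is a particular instance of a computable bounding function $\b : \T \to \N$, namely $\b(T) = p(|T|)$.

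With this in hand, I would invoke \Cref{thm:unbounded} with $I = \SS$ and this $\b = p(|\cdot|)$. The theorem asserts that for $\apt$-observations over $\SS$ no algorithm achieves characteristic sets of $\b$-bounded mass; since this holds for the polynomial $\b$ we just fixed, no polynomial-time learner can guarantee polynomially-bounded characteristic-set mass for $\T$ under PTOs. Hence $\T$ is not $\apt$-IPTD, which settles the policy-trajectory case.

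For the remaining two cases, I would appeal to the monotonicity principle recorded in \Cref{subsec:ALT}: the mass and size of characteristic sets can only grow when the informational content of the observation map $\a$ is reduced. As PTOs are strictly more informative than TBOs (a trajectory reveals the exact step count and much more), which are in turn strictly more informative than IOOs (where $\a \equiv \eps$ reveals nothing beyond the input-output pair), any characteristic set for $\T$ under $\atb$ or under $\eps$ has mass at least that of the corresponding set under $\apt$. Because the $\apt$-masses already cannot be held below any polynomial, neither can the $\atb$- or $\eps$-masses, so $\T$ is neither $\atb$-IPTD nor $\eps$-IPTD. Combining the three cases yields the corollary.

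The main obstacle is to make the monotonicity step fully rigorous, since a single PTO is typically far longer than the corresponding TBO or IOO, so naively transferring a fixed characteristic set from the weaker to the stronger observation type could \emph{inflate} rather than shrink its mass. The clean way around this is to note that the witnesses produced by the proof of \Cref{thm:unbounded} are linear-time TMs: on such machines the trajectory length $t_M(x)$ is $O(|x|)$, so the masses under $\apt$, $\atb$, and $\eps$ are polynomially comparable and the monotonicity comparison is legitimate. Alternatively, and perhaps more transparently, one can bypass monotonicity entirely by observing that the Halting-Problem reduction underlying \Cref{thm:unbounded} exploits only the input-output behavior of the constructed machines together with their step counts; the very same diagonal construction therefore refutes computable---and in particular polynomial-mass---characteristic sets directly for $\atb$- and $\eps$-observations, with no appeal to the PTO result at all.
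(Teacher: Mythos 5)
Your proposal is correct and follows essentially the same route as the paper: specialize \Cref{thm:unbounded} to a polynomial bounding function $\b(T)=p(|T|)$ to rule out $\apt$-IPTD, then transfer to time-bound and input-output observations via the monotonicity of characteristic-set complexity in the informational content of $\a$ (which the paper invokes implicitly with its ``consequently''). Your closing observation—that the transfer is cleanest by noting the diagonal construction with $Y_{T,w}$ and $N_{T,w}$ already refutes bounded-mass characteristic sets directly for $\atb$- and $\eps$-observations, since the mass always dominates $|x|$ and the witnesses are indistinguishable a fortiori under weaker observations—is exactly the right way to make the paper's implicit monotonicity step airtight.
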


\paragraph{Observation Trees.} 
As shown in \Cref{thm:reduction}, one can solve the $\mathcal{P}_A^B$--LP, and thus the $\T$-LP, via enumeration. However, this approach requires an super-exponential runtime in the size of the ground-truth representation in order to eliminate all hypotheses preceding the correct one. Moreover, learning-by-enumeration fails to leverage structural information revealed by the observations. In contrast, both active \citep{angluin1987learning, pitt1989inductive, vaandrager2022new} and passive \citep{gold1978complexity, dupont1996incremental, parekh2001learning} FST-learning algorithms exploit structural information by constructing observation structures from example sets $E_M(S)$ and applying state-merging strategies. Notably, these algorithms assume an unrestricted input source $I = A^\star$, allowing free exploration of the state-transition diagram. For arbitrary input sources, learning becomes significantly harder -- at least as difficult as learning TMs from behavior observations, as \Cref{thm:reduction} suggests. The key challenge in learning TMs stems from the fact that, unlike FSTs, their transition function dictates which cells to scan, restricting direct exploration of the state-transition diagram. As a result, we enter uncharted territory, where we aim to develop a fast algorithm for learning FSTs from arbitrary input sources. To incorporate structural information from examples, we introduce the following definition.

\begin{definition}[Observation Tree] \label{def:tree}
    Given an example set $E_M(S) = \{(x, M(x)) : x \in S\}$ for an automaton $M \in \Phi_A^B$, we define the observation tree $\mathcal{T}_M(S) \in \wt{\Phi}_A^B$ as a partial FST with paths from the root to leaf states which correspond exactly to the input-output sequences from $E_M(S)$.
\end{definition}

\paragraph{State Merging.} Given an input source $I \subseteq A^\star$ and an automaton $M \in \Phi_A^B$, we denote by $I(M) \in \wt{\Phi}_A^B$ the partial automaton obtained after the pruning of unused states and transitions when all inputs come from $I$. We want to merge the states of $\mathcal{T}_M(S)$ into clusters such that the transformed automaton computes the same partial rational function as $I(M)$. When we merge two states $p, q \in Q(\mathcal{T}_M(S))$, we also merge all of the transition paths going through $p$ and $q$. Hence, the merger of two states could potentially cascade into further mergers. Moreover, mergers can only occur between states with non-contradictory paths, hence the following definition.
\begin{definition}[Valid Merger] \label{def:merger}
    Let $\pi$ denote the merging of the state-space $Q(\mathcal{T}_M(S))$ into the clusters $C_1, \dots, C_k$. We define $\pi$ as a valid merger if the quotient automaton $\mathcal{T}_M(S) / \pi$ is deterministic.
\end{definition}
If $|Q(\mathcal{T}_M(S))| = n$, then one can test the validity of a certain merger in $O(n|A|)$ time by checking the transition function of $\mathcal{T}_M(S) / \pi$. Also, observe that one can merge all the paths containing $p, q \in Q(\mathcal{T}_M(S))$ in $O(n|A|)$ time by performing mergers in a BFS manner (starting from the merger of $p$ and $q$) and noting that $(n-1)$ upper-bounds the possible total number of mergers. From now on, we will treat $|A|$ as a constant. Let us define the \textbf{similarity score} $s(p,q)$ as 0, in case $\texttt{merge}(p,q)$ produces an invalid merging, and as $r$, in case $\texttt{merge}(p,q)$ produces a deterministic quotient automaton $\mathcal{T}_M(S) / \pi$ with $n-r$ states. In other words, $s(p,q)$ measures the similarity of the transition paths going through $p$ and $q$. Now, since we can both perform $\texttt{merge}(p,q)$ and test the validity of that merger in linear time, we can also compute the similarity score $s(p, q)$ in $O(n)$ time.

The state-merging algorithms of \citet{oncina1992inferring, vaandrager2022new}, which solve the $\mathcal{P}_A^B$--LP under an unrestricted input source $I = A^\star$, attempt to incrementally construct a sub-tree automaton of $\mathcal{T}_M(S)$ by merging indistinguishable states. This approach works when the input is unrestricted, as every state in the ground truth automaton is eventually observed with distinguishing strings. With a restricted input source, however, most states in $\mathcal{T}_M(S)$ remain indistinguishable, making naive merging unreliable and prone to errors. 
Instead, we propose the \textbf{Maximum-Similarity Merging} (MSM) algorithm, which merges states only when sufficient evidence supports the merger. Given a partial automaton $M \in \wt{\Phi}_A^B$, MSM iteratively merges the state pair $(p,q)$ with the highest nonzero similarity score $s(p,q)$. The process continues until all similarity scores drop to zero, at which point MSM returns the generated quotient automaton. The naive implementation of MSM runs in $O(n^4)$ time: Finding the highest similarity score among $n$ states requires $O(n^3)$ time, and at most $n - 1$ mergers occur.

Unfortunately, the greedy MSM strategy provably cannot learn $\mathcal{P}_A^B$ in the limit, even under unrestricted input observations. However, we conjecture that for many natural orderings $w : \N \surj I$ of the input set, MSM successfully learns a correct FST representation in the limit.  We leave as an open question the characterization of the set of favorable orderings: $W_M^I = \{w : \N \surj I \mid \text{MSM learns } M \in \Phi_A^B \text{ in the limit} \}$. Finally, we show that MSM enjoys wide applicability. Namely, MSM can learn the class of all recursive functions in the limit when policy-trajectory observations originate from a restricted class of TMs. The proof of this result appears in \Cref{app:cot}.
\begin{restatable}{theorem}{MSM} \label{thm:MSM}
    For every $f \in \CC_\SS$, there exists $\G_f \supset \Sigma$ and a TM $T_f \in \mathrm{T}_\Sigma^{\Gamma_f}$ such that MSM learns $f$ in the limit from policy-trajectory observations of $T_f$ and from a polynomial number of samples .
\end{restatable}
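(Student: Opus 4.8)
The plan is to engineer, for each total recursive $f \in \CC_\SS$, a special tape alphabet $\G_f$ and a TM $T_f \in \mathrm{T}_\Sigma^{\G_f}$ whose policy-trajectory observations are so information-rich that the induced FST-learning problem becomes trivially solvable by greedy state merging. The guiding idea is that \Cref{thm:reduction} converts the $(\T, \apt)$--LP into the $\mathcal{P}_A^B$--LP for $\psi(T_f)$, so it suffices to build $T_f$ such that MSM, run on the observation tree $\mathcal{T}_{\psi(T_f)}(S)$, correctly recovers $I(\psi(T_f))$ from a polynomially sized sample set. The negative result \Cref{thm:unbounded} warns that this is impossible for \emph{arbitrary} TMs computing $f$, so the whole strength of the theorem lies in the freedom to \emph{choose} a favorable $T_f$: I would design $T_f$ so that every state it actually visits writes a unique, self-identifying marker symbol drawn from the enlarged alphabet $\G_f$. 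This way, distinct states of $\psi(T_f)$ produce distinguishing outputs on their very first transition, so apartness is immediate and no two inequivalent states ever get spuriously merged.

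First I would fix $f$ and take any TM $T'_f \in \T$ computing it with, say, state set $Q$. Then I would construct $T_f$ over $\G_f = \Sigma \sqcup \{\l\} \sqcup \{m_q : q \in Q\} \sqcup (\text{auxiliary scratch symbols})$ by having $T_f$ simulate $T'_f$ but, on entering state $q$, first write the marker $m_q$ onto a dedicated scratch region (or onto the scanned cell in a reversible way that it later cleans up so the final output still equals $f(x)$). The key property to verify is that the semantics $\g_{\psi(T_f)}$, viewed as a rational map $A^+ \to B$ with $A=\G_f$ and $B = \G_f \times \{L,R,S\}$, has the property that the read-symbol prefix $T_f\{x\}$ encountered along any computation uniquely determines the current simulated state. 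Concretely, I would show that for the induced input strings in $M\{I\}$, each prefix already contains a marker $m_q$ pinning down the state, so the Generalized Nerode classes $[u]_{\g_{\psi(T_f)}}$ are separated by length-one distinguishing strings.

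Next I would analyze MSM's behavior on $\mathcal{T}_{\psi(T_f)}(S)$. Because distinct ground-truth states carry distinct markers, any merge of two tree states corresponding to different simulated states of $T_f$ immediately creates a nondeterministic transition (two different outputs on the marker-reading move), hence has similarity score $0$ by \Cref{def:merger}; conversely, any two tree states representing the \emph{same} simulated state agree on all observed continuations and yield a valid, state-reducing merger with positive score. Therefore MSM greedily performs exactly the ``correct'' mergers and halts with the quotient automaton equal to $I(\psi(T_f))$, which by \Cref{thm:reduction} yields a representation in $\T$ computing $f$ restricted to the input source. For the polynomial-sample claim, I would exhibit a characteristic sample $S$ of size $\texttt{poly}(|T_f|)$: take, for each transition of $T_f$, one input $x$ whose computation exercises that transition, so that every edge of $\psi(T_f)$'s transition diagram appears in $\mathcal{T}_{\psi(T_f)}(S)$; since $T_f$ has $|T_f|\cdot|\G_f|$ transitions, a bounded number of witnessing inputs per transition suffices.

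The main obstacle I anticipate is the interaction between the marker-writing construction and the \emph{output-validity} convention for $T_f$: the TM computation convention in \Cref{def:TM} requires that a single contiguous string from $\Sigma$ occupy the tape at halting, so the scratch markers $m_q$ must be fully erased before $T_f$ halts, yet they must have been \emph{read back} (and hence appear in $T_f\{x\}$ and in the FST observations) in order to perform their state-identifying function. Reconciling ``write a marker, read it to expose the state, then clean it up'' without corrupting $f(x)$ or inflating $t_{T_f}(x)$ beyond a polynomial factor is the delicate engineering step. I would resolve it by interleaving a read-then-erase micro-routine around each simulated step of $T'_f$, and then argue that this only multiplies the running time and state count by constants depending on $f$, preserving both correctness of the final output and the polynomial-sample bound; verifying that this local rewriting never accidentally equates two distinct-state observation prefixes is where the bulk of the careful case analysis will go.
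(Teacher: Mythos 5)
Your overall architecture matches the paper's: enlarge the tape alphabet with fresh marker symbols that the machine writes, reads back (so they appear in the policy trajectory and hence in the induced FST observations), and immediately overwrites (so the final tape content, and thus $f(x)$, is unchanged), then argue that these markers make MSM's greedy mergers safe, with one witnessing input per transition giving the polynomial sample bound. The ``delicate engineering step'' you flag at the end is resolved in the paper exactly as you anticipate: each transition $q_i \xrightarrow{\s:\s',D} q_j$ of a never-staying machine $T_f'$ is split into $q_i \xrightarrow{\s:\g_k,S} p_k \xrightarrow{\g_k:\s',D} q_j$ with a fresh symbol $\g_k$ \emph{per transition} (rather than per state, as you propose), so the marker lives on the tape for exactly one step and no separate scratch region or cleanup pass is needed. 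This only doubles the running time and adds one state per transition, so your polynomial-sample accounting goes through.

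There is, however, one step in your argument that fails as stated: the claim that any merger of two tree states corresponding to \emph{different} simulated states ``immediately creates a nondeterministic transition, hence has similarity score $0$.'' Nondeterminism only arises if the two tree states have observed outgoing transitions on a \emph{common} input symbol with different outputs. Under a restricted input source, two inequivalent tree states may well have been observed only on disjoint sets of read symbols, in which case their merger is valid (deterministic) yet spurious. The correct dichotomy, which the paper uses, is that such a spurious merger either is contradicted outright or has similarity score exactly $1$ (only the pair itself collapses, with no cascade), whereas mergers of tree states representing the \emph{same} ground-truth state cascade along shared continuations and achieve strictly higher scores once every transition has been exercised. It is this score gap, not an invalidity argument, that guarantees MSM's maximum-similarity rule performs all correct mergers before it ever considers a spurious one; after the correct mergers are exhausted, any remaining score-$1$ mergers only identify states that are equivalent relative to the input source $I$. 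You would need to replace your score-$0$ claim with this comparison to make the greedy analysis sound.
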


\section{Conclusion} \label{sec:conclusion}

In this paper, we extend the learning-in-the-limit framework to include observations of the computational process under a restricted input source. By formally modeling different types of computational information—from simple runtime estimates to full behavioral trajectories—this work establishes new theoretical bounds on the limits of learning algorithms and offers insights into how intelligent agents can learn complex computational tasks from observing the process, not just the outcome. While classical input-output observations prevent learning the class of computable functions in the limit, we prove that both rough runtime estimates and policy-trajectory observations remove this learning barrier. Furthermore, by reducing the learning of general recursive functions with policy-trajectory observations to the learning of rational functions with input-output observations, we open the way for the application of FSA-identification algorithms to the more general problem of learning Turing machines. Open questions remain regarding the existence of polynomial-size characteristic sets for learning rational functions under restricted input observations and the impact of input orderings on the correctness of greedy state-merging algorithms. Future research should explore broader computational paradigms, including agents interacting with reactive environments.

\acks{This work was partially funded by an unrestricted gift from Google and by the Swiss National Science Foundation (grant number 212111).}

\bibliography{references}

\newpage 
\appendix

\section*{Organization of the Appendix}

Appendix~\ref{app:preliminaries} details the formal framework. Appendix~\ref{app:blackbox} contains the proof for the input-output-learning setting (Section~\ref{sec:blackbox}). Appendix~\ref{app:clock} provides the proof for learning from time-bound observations (Section~\ref{sec:clock}). Appendix~\ref{app:cot} includes the proofs related to learning from policy-trajectory observations (Section~\ref{sec:CoT}).

\section{Supplement to the Formal Framework} \label{app:preliminaries}

\subsection{Finite-State Transducers} \label{app:sec:fst}

Two properties make FSTs elegant objects of study: the existence of a canonical minimal FST given by the Generalized Myhill-Nerode Theorem and the existence of a fast state-distinguishing procedure given by Hopcroft's algorithm (see \citep{steffen2011introduction} and \citep{hopcroft1971n}, respectively). We will briefly consider both.

\begin{theorem}[Generalized Myhill-Nerode] \label{thm:myhill-nerode}
   A function $\chi : A^+ \to B$ is rational if and only if $\chi$ is the semantics of some FST. Moreover, whenever $\chi$ is rational, there exists a unique min-state FST $M_\chi$ with semantics given by $\chi$ and having the following description: $M = (Q, A, B, \d, \g, q_0)$, where $Q = \{[u]_\chi : u \in A^\star\}$, $\d([u]_\chi, a) = [u \cdot a]_\chi, \ \forall u \in A^\star, a \in A$, $\g([u]_\chi, a) = \chi(u \cdot a)$, and $q_0 = [\l]_\chi$.
\end{theorem}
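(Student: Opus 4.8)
The plan is to establish the three components of the statement separately: the forward implication (an FST semantics is rational), the backward implication (a rational map is realized by the displayed FST $M_\chi$), and the minimality/uniqueness of $M_\chi$. The unifying observation, and the technical heart of the argument, is that the Generalized Nerode relation $\equiv_\chi$ is a right congruence of finite index precisely when $\chi$ is rational.

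For the forward direction I would fix an FST $M = (Q, A, B, \d, \g, q_0)$ with $\g_M = \chi$ and consider the reachability map $\Phi : A^\star \to Q$ given by $\Phi(u) = \d(q_0, u)$ (with $\Phi(\eps) = q_0$). Using the recursive extension of $\g$, for every $u \in A^\star$ and every $w \in A^+$ one has $\chi(u \cdot w) = \g(q_0, u \cdot w) = \g(\Phi(u), w)$, so $\Phi(u) = \Phi(v)$ forces $u \equiv_\chi v$. Hence $u \mapsto \Phi(u)$ induces a well-defined surjection from the reachable states of $M$ onto the Nerode classes, giving $|\chi| \le |Q| < \infty$; this is the claim already flagged by $|\g_M| \le |Q|$ in the text.

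For the backward direction I would take $|\chi| < \infty$ and verify that the displayed data $M_\chi$ is a genuine FST. The only nonroutine checks are that $\d$ and $\g$ respect the choice of representative. If $u \equiv_\chi v$, then choosing $w = a$ gives $\chi(u \cdot a) = \chi(v \cdot a)$, so $\g$ is well-defined; and choosing $w = a \cdot w'$ for arbitrary $w' \in A^+$ gives $u \cdot a \equiv_\chi v \cdot a$, i.e. $\equiv_\chi$ is a right congruence, so $\d$ is well-defined. A short induction on $|u|$ then shows $\d(q_0, u) = [u]_\chi$, whence $\g_{M_\chi}(u \cdot a) = \g([u]_\chi, a) = \chi(u \cdot a)$ for every $u \in A^\star$ and $a \in A$, i.e. $\g_{M_\chi} = \chi$.

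Finally, for minimality and uniqueness I would reuse the surjection from the forward direction: any FST $M'$ with semantics $\chi$ has at least $|\chi|$ reachable states, so $|Q(M')| \ge |\chi| = |Q(M_\chi)|$, proving $M_\chi$ is min-state. A min-state FST must have all states reachable, so when $|Q(M')| = |\chi|$ the surjection $\theta : Q(M') \to Q(M_\chi)$ defined by $\theta(\Phi(u)) = [u]_\chi$ becomes a bijection; checking that $\theta$ maps $q_0'$ to $[\eps]_\chi$ and commutes with $\d$ and $\g$ (all immediate from $\theta(\Phi(u)) = [u]_\chi$ and the definitions) exhibits $\theta$ as an FST isomorphism. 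I expect the main obstacle to be purely bookkeeping: handling the empty string consistently (the relation quantifies over $w \in A^+$, while $q_0 = [\eps]_\chi$ indexes $\eps \in A^\star$) and making the right-congruence verification airtight, since everything else follows by counting and induction.
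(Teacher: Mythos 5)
Your proof is correct. Note that the paper does not actually prove this theorem: it is stated as a known result imported from the literature (with a pointer to Steffen et al.), so there is no in-paper argument to compare against. Your write-up is the standard Myhill--Nerode construction --- finite index of $\equiv_\chi$ from the reachability map $\Phi$, well-definedness of $\d$ and $\g$ via the right-congruence property, and uniqueness up to isomorphism by turning $\Phi(u)\mapsto[u]_\chi$ into a bijection --- and all the steps, including the $\eps$-versus-$A^+$ bookkeeping you flag, check out against the paper's definitions.
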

Hence, every rational $\chi$ defines a unique canonical min-state transducer $M_\chi$ with semantics $\chi$ and precisely $|\chi|$ states. And any other FST with these two properties is necessarily isomorphic to $M_\chi$.

Next, we consider the question of distinguishing FST states. Recall that we say that two states $p, q \in Q$ are apart and write $p \# q$ whenever there exists a distinguishing string $\s \in A^+$ s.t. $\g(p, \s) \neq \g(q, \s)$. Interestingly, we can quickly test for state apartness with Hopcroft's algorithm.

\begin{theorem}[Hopcroft's Algorithm] \label{thm:hopcroft}
   Given an FST $M = (Q, A, B, \d, \g, q_0) \in \Phi_A^B$ with $n$ states, there exists an $O(n \log n)$-time algorithm which converts $M$ into the min-state FST $M_{\g_M}$. Moreover, Hopcroft's algorithm finds distinguishing strings of length at most $n-1$ for each $p,q \in Q$ s.t. $p\#q$. Finally, given another FST $M'$ with $n'$ states, Hopcroft's algorithm determines in $O((n+n') \log(n+n'))$ time if $\g_M = \g_{M'}$ and produce a distinguishing string of length at most $n+n'-1$ if not. Finally, Hopcroft's algorithm can test state apartness in $O(n \log n)$ time even for partial FST $M \in \wt{\Phi}_A^B$.
\end{theorem}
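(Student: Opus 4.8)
The plan is to realize Hopcroft's algorithm as a \emph{partition-refinement} procedure on the state space $Q$, adapting the classical DFA-minimization scheme to the Mealy-machine setting of $\Phi_A^B$. First I would fix the target object. By the Generalized Myhill-Nerode Theorem (\Cref{thm:myhill-nerode}), the coarsest partition $\pi^\star$ of $Q$ whose merged states are pairwise non-apart has, as its blocks, exactly the fibers of the behavior map $q \mapsto \g(q, \cdot)$; the quotient $M / \pi^\star$ in the sense of \Cref{def:quotient} is then isomorphic to $M_{\g_M}$. So it suffices to compute $\pi^\star$ together with a short witness $\s$ for every apart pair.

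Next I would specify the refinement. Initialize $\pi_0$ by grouping states with identical single-symbol output signatures, i.e. $p, q$ share a block iff $\g(p, a) = \g(q, a)$ for all $a \in A$; states in distinct $\pi_0$-blocks are already apart via a length-$1$ witness. Then repeatedly refine: a block $C$ and a symbol $a \in A$ act as a \emph{splitter} for a block $B$ when the states of $B$ disagree on whether $\d(\cdot, a)$ lands in $C$, and one splits $B$ along that predicate. Correctness is the standard refinement invariant, proved by induction on the number of splits: once no splitter remains, two states lie in the same block iff they agree on $\g(\cdot, w)$ for every $w \in A^+$, i.e. iff they are not apart, so the stable partition equals $\pi^\star$.

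The efficiency bound---$O(n \log n)$ with $|A|$ and $|B|$ treated as constants---is the main obstacle, and the whole reason the result is credited to Hopcroft rather than to naive refinement, which only gives $O(n^2)$. The key idea is to keep a worklist of pending splitters $(C, a)$ and, whenever a block is split into two parts, to enqueue only the \emph{smaller} part as a future splitter for each symbol. The amortized argument is that a fixed state can sit inside an enqueued ``smaller half'' at most $O(\log n)$ times, since each such membership at least halves the size of the containing block; charging the per-split processing cost to these memberships yields $O(|A| \cdot n \log n) = O(n \log n)$ total work. I would then supply the bookkeeping (doubly linked block structures and per-symbol inverse-transition lists) that lets each split be executed in time proportional to the smaller half, which is exactly what the amortization charges against.

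Finally I would extract the witness-length and comparison claims from the refinement depth. For $k \ge 0$ let $\approx_k$ identify states that agree on $\g(\cdot, w)$ for all $w \in A^+$ with $|w| \le k$, so $\approx_0$ is the one-block partition and $\approx_\infty = \pi^\star$. The chain $\approx_0 \supseteq \approx_1 \supseteq \cdots$ refines strictly until it stabilizes, and since the block count climbs from $1$ to at most $n$ with each strict step adding a block, stabilization occurs by $\approx_{n-1}$; hence any apart pair is separated by a string of length at most $n-1$, and recording the symbol and successor block used at each split lets me back-trace such a $\s$. For comparing two transducers I would run the same procedure on the disjoint union of $M$ and $M'$ (an FST with $n + n'$ states) and report $\g_M = \g_{M'}$ exactly when $q_0$ and $q_0'$ land in a common final block; the identical depth argument gives the witness bound $n + n' - 1$ and the runtime $O((n+n')\log(n+n'))$. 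For a partial transducer $M \in \wt{\Phi}_A^B$ I would extend $\g$ and $\d$ with a fresh ``undefined'' value, letting missing outputs and transitions participate as an extra signature; apartness then remains decidable by the same refinement and the clock stays at $O(n \log n)$.
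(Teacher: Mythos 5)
The paper never proves \Cref{thm:hopcroft}: it is invoked as a classical result with pointers to \citep{hopcroft1971n} and \citep{steffen2011introduction}. For the total-FST claims, your reconstruction is exactly the standard argument those sources contain -- Moore-style initialization by one-symbol output signatures, splitter-based refinement, the process-the-smaller-half amortization for $O(n \log n)$, the $\approx_k$ chain for the length-$(n-1)$ witness bound, and the disjoint-union construction for equivalence testing -- and all of it is correct. One small repair: the quotient $M / \pi^\star$ is isomorphic to the canonical machine $M_{\g_M}$ of \Cref{thm:myhill-nerode} only after unreachable states are discarded, since the states of $M_{\g_M}$ are Nerode classes $[u]_{\g_M}$ and hence all reachable; an $O(n)$ reachability pass before quotienting fixes this.

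The genuine gap is the final claim about partial FSTs $M \in \wt{\Phi}_A^B$. The paper takes apartness from \citep{vaandrager2022new}, where undefined behavior is compatible with everything: $p \# q$ requires a string $\s$ on which \emph{both} states have defined outputs and those outputs differ. Your completion with a fresh ``undefined'' output value computes a different, strictly finer relation, because it separates states whose defined domains merely differ. Concretely, if $p$ carries only a transition on input $a$ with output $0$ and $q$ only a transition on input $b$ with output $0$, then $p$ and $q$ are not apart, yet your completed machine splits them on $b$ (output $\bot$ versus $0$). The failure is structural, not a bookkeeping slip: non-apartness on partial machines is not transitive (add $r$ defined on $a$ with output $1$ and on $b$ with output $0$; then $p \# r$, while neither the pair $p, q$ nor the pair $q, r$ is apart), so the blocks of \emph{any} partition-refinement output cannot coincide with the non-apartness classes, and no completion trick can rescue this. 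Deciding apartness on partial machines requires a pair-oriented argument in which splits are justified only by conflicts on commonly defined suffixes -- this is the regime actually needed in the paper, where apartness is tested inside the partial observation trees $\mathcal{T}_M(S)$ of \Cref{def:tree} used by the MSM analysis in \Cref{thm:MSM}.
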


\subsection{Ordering of Turing Machines}

We record for future reference in the subsequent proofs the following simple proposition which establishes a computable enumeration of the family of Turing machines $\T$.

\begin{proposition}[Enumeration of TMs] \label{prop:enumeration}
    There exists a computable bijection $\t : \N \to \T$ such that for all $n \in \N$, the number of states in $\t(n)$ increases monotonically with $n$. Moreover, $|\t(n)| = o(\log n)$, and there exists a polynomial-time algorithm that, given a natural number $n$ in binary, outputs a binary representation of $\t(n)$.
\end{proposition}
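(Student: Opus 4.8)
The plan is to enumerate $\T$ in consecutive blocks indexed by the number of states, ordered by increasing state count, and within each block by a canonical mixed-radix encoding of the transition table. Fix the tape alphabet $\G$ and write $g = |\G|$, a constant. A machine with $k$ states has (after canonically relabelling its state set as $\{1, \dots, k\}$) a transition function $\d$ that assigns to each of the $kg$ pairs $(q,a) \in Q \times \G$ one of the $3kg$ triples in $Q \times \G \times \{L,R,S\}$; together with the designation of the initial and halt states, the number of $k$-state machines is therefore $N_k = \Theta\!\left(k^2\,(3kg)^{kg}\right)$. The single quantitative fact driving the whole argument is $\log N_k = kg\log(3kg) + O(\log k) = \Theta(k\log k)$, where we use that $g$ is constant. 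Let $P_k = \sum_{j=1}^{k} N_j$ denote the cumulative count; since $N_j$ grows super-exponentially in $j$, we have $P_k = \Theta(N_k)$.

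To define $\t$, I would map the integer interval $\{P_{k-1}, \dots, P_k - 1\}$ bijectively onto the set of $k$-state machines. Given $n$ in binary, the first step is to locate the block: compute $P_1, P_2, \dots$ incrementally until the running sum first exceeds $n$, which determines the unique $k$ with $P_{k-1} \le n < P_k$, and set the local index $m = n - P_{k-1} \in \{0, \dots, N_k - 1\}$. The second step decodes $m$ into a concrete machine: fix an order on the $kg$ pairs $(q,a)$ and on the $3kg$ transition targets (plus two positions for the initial and halt states), so that a $k$-state machine is exactly a tuple of digits in the mixed-radix system with bases $3kg$ (repeated $kg$ times) and $k$ (twice); writing $m$ in this mixed-radix system via successive division with remainder yields the digits, from which I assemble the binary representation of $\t(n)$. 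This is a standard base-conversion bijection, so $\t$ is a well-defined computable bijection onto $\T$.

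It remains to verify the three asserted properties. Non-decreasing state count is immediate by construction, since $|\t(n)|$ equals the block index $k$, which is non-decreasing on consecutive intervals. For $|\t(n)| = o(\log n)$: if $\t(n)$ has $k$ states then $n \ge P_{k-1} \ge N_{k-1}$, so $\log n \ge \log N_{k-1} = \Omega(k\log k)$, whence $k\log k = O(\log n)$ and therefore $k = o(\log n)$. For polynomial time, the same counting fact shows $\log N_k \le (1+o(1))\log N_{k-1} \le (1+o(1))\log n$, so every integer manipulated ($N_j$, $P_j$, $m$) has $O(\log n)$ bits; the block search runs for $k = o(\log n)$ iterations, each computing $N_j = \Theta\!\left(j^2(3jg)^{jg}\right)$ by repeated squaring in $\mathrm{poly}(\log n)$ time, and the mixed-radix decoding extracts $O(\log n)$ digits each in $\mathrm{poly}(\log n)$ time, for a total runtime polynomial in the bit-length of $n$.

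The main obstacle, and the point requiring the most care, is the simultaneous role played by the estimate $\log N_k = \Theta(k\log k)$: it must on the one hand force the block index $k$ to be genuinely small relative to $\log n$ (giving both the $o(\log n)$ bound on states and the $O(\log n)$ bound on the number of block-search iterations), and on the other hand guarantee that the cumulative counts $P_j$ and the exponential quantities $N_j$ never exceed $n^{1+o(1)}$ in magnitude, so that all integer arithmetic stays within $\mathrm{poly}(\log n)$. Establishing that $N_k \le n^{1+o(1)}$ from $n \ge N_{k-1}$ — i.e.\ that consecutive block counts are polynomially (indeed sub-polynomially) related on the log scale — is the technical heart; the bijection itself is routine.
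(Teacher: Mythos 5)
The paper states this proposition without proof (it is introduced as a ``simple proposition'' and Appendix A.2 contains no argument for it), so there is no paper proof to compare against; your block-by-state-count enumeration with mixed-radix decoding of the transition table is exactly the standard argument the paper implicitly relies on. Your proof is correct --- the count $N_k = k^2(3k|\G|)^{k|\G|}$ gives $\log N_k = \Theta(k\log k)$, which simultaneously yields the $o(\log n)$ bound on the number of states and keeps every intermediate integer at $O(\log n)$ bits --- with only the cosmetic caveat that defining a genuine bijection requires the exact count of $k$-state machines (which your mixed-radix construction does in fact use) rather than the $\Theta$-estimate you initially write down.
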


\subsection{Encodings and Simulations} \label{app:sec:ctt}
We now dedicate a bit of space to highlight an interesting paradox that arises if one takes the Church-Turing Thesis (CTT) seriously. As we discussed in the introduction, the CTT argues that the formal predicate \textit{Turing-computable} completely captures the intuitive notion of computability. Clearly, since not all computable functions operate over domains of strings (e.g., running a cellular automaton for $n$ steps), TMs cannot \textit{implement} every computable function. Instead, the CTT claims that TMs can \textit{simulate} every computable function. In formal terms, according to the CTT, for any \textit{computable} function $f : \D \to \D$ on an enumerable domain $\D$ and any \textit{reasonable} injective encoding $\varphi : \D \inj \SS$, there exists a TM $T \in \T$ such that $\varphi^{-1} \circ T \circ \varphi = f$.\footnote{Notice that if $\varphi$ is intuitively computable, then so is $\varphi^{-1}$ since both $\D$ and $\SS$ are enumerable.} Hence, $f$ is computable if and only if the following diagram commutes for some $T \in \T$.
\begin{equation*}
    \begin{tikzcd}
        \D \arrow[r,"M"] \arrow[hookrightarrow, d,"\varphi"']
        & \D \arrow[hookrightarrow, d,"\varphi"] \\
        \SS \arrow[r,"T"] & \SS
    \end{tikzcd}
\end{equation*}

Now, we need to specify what \textit{reasonable} means. Clearly, we cannot leave $\varphi$ unrestricted. Indeed, suppose $\D = \N_0$ and let $f(n) = \mathbf{1}_\mathcal{P}(n)$, where $\mathbf{1}_\mathcal{P}$ denotes the indicator function for the set of prime numbers $\mathcal{P}$. Also, let $H = \{\langle T, w\rangle_\Sigma \vert T\in\T,  w \in \SS, \text{ and } $T(w)$ $\text{ halts}$\}$, where $\langle T, w\rangle_\Sigma$ denotes some encoding of the pair $(T,w)$ into $\SS$. Now, if $\varphi$ bijectively maps $\mathcal{P}$ into $H$ and $\N_0 \setminus \mathcal{P}$ into $\SS \setminus H$, then since $f$ is intuitively computable, there must exist a TM $T$ deciding the halting problem -- a contradiction. Hence, an encoding should not define an intuitively uncomputable procedure. Therefore, the predicate \textit{reasonable} must also imply \textit{computable} -- a circumstance which pushes us into an infinite regress since our definition of computability relies on reasonable encodings.

We believe that this infinite regress should not cause us to abandon the CTT. Indeed, TMs \textit{do} seem capable of simulating any intuitively computable process under reasonable encodings. However, we also think that reasonable encodings must come from computable procedures. How can we break out of the infinite regress and define reasonability without any reference to computability? The way we propose goes through the old saying ``The limits of my language mean the limits of my world.'' \citep[Proposition 5.6]{wittgenstein1922tractatus}. In a sense, we cannot meaningfully talk about the computable function $f$ or the symbolic domain $\D$ without first embedding $f$ and $\D$ into first-order logic (FOL). Hence, if we denote with $\Xi$ the alphabet of FOL, we can think of some encoding $E : \D \to \Xi$, mapping $w$ to a finite FOL description of $\langle w \rangle_{\mathrm{FOL}}$, as provided to us from the start. Then, we can define an encoding $\varphi : \D \to \SS$ as reasonable if and only if $\varphi$ constitutes a composition of $E$ with some TM taking strings from $\Xi^\star$ to  $\SS$.

\subsection{Tasks as Computable Functions}
We use the words ``function'' and ``task'' interchangeably in order to emphasize the fact that all everyday reasoning or mechanical tasks represent computable functions since a TM can simulate these tasks with an appropriate encoding. For example, brewing a cup of tea specifies the following task $\texttt{BrewTea} : \mathcal{K} \to \mathcal{K}$, where $\mathcal{K}$ denotes the symbolic domain of the kitchen. More precisely, we could think of a kitchen-state $\kappa \in \mathcal{K}$ as a function $\kappa : \texttt{KitchenLattice} \to \texttt{Colors}$ assigning colors to the voxels in the kitchen space. Hence, upon receiving as input a kitchen-state $\kappa$, the computational agent implementing \texttt{BrewTea} goes through a series of kitchen-states and arrives at some $\texttt{BrewTea}(\kappa)$ with voxels that indicate the presence of a freshly brewed cup of tea. Now, let us consider an arbitrary computable encoding $\varphi : \mathcal{K} \to \SS$ which injectively maps kitchen-states to TM-readable $\SS$ representations. Then, invoking the CTT, there exists a TM $T_{\texttt{BrewTea}}$ that simulates the computational path of \texttt{BrewTea} and outputs $\varphi(\texttt{BrewTea}(\kappa))$ when prompted with $\varphi(\kappa)$. In other words, we make the point that a robot with sensors encoding the environment in $\SS$ could learn how to brew a cup of tea in the limit by running the presented algorithms.

\subsection{Characteristic Sets} \label{app:sec:c-sets}

\distinguish*

\begin{proof}
    Assume the contrary: $\forall x \in S_M \cup S_{M'}, \ \eta_M(x) = \eta_{M'}(x)$ and $x \in D_M \cap D_{M'}$. Let $S = S_M \cup S_{M'}$. Then, $E_M(S) = E_{M'}(S)$. Now, by the characteristic property of $S_M$, $\L(E_M(S)) = R$ which computes the same function as $M$. Analogously for $S_{M'}$, $R$ must compute the same function as $M'$ -- contradiction. Hence, there exists an input distinguishing the $\a$-observable behavior of $M$ and $M'$ on the union of their characteristic sets.
\end{proof}
\section{IOO--earning Proofs} \label{app:blackbox}

\blackbox*

\begin{proof}
    We want to show that for all $f \in \mathbf{Q}(\M_\D)$, all input sources $I \subseteq \SS$, and all surjective orderings $w : \N \surj I \cap D_f$ of the example set $E_f(I \cap D_f) = \{(x, f(x)) : x \in I \cap D_f\}$, after some finite time $t^\star = t^\star(f, I, w)$, our learning algorithm $\L$ will converge to a correct representation of $f$. In other words, we want to show that $\forall t \geq t^\star, \ h_{t|I \cap D_f} = f_{|I \cap D_f}$, where $h_t$ is the hypothesis generated by $\L$ at time $t$.

    Let us fix a ground truth $f \in \mathbf{Q}(\M_\D)$, an input source $I \subseteq \SS$, and a surjective ordering of the inputs $w : \N \surj I \cap D_f$ s.t. $I \cap D_f = \{w_t\}_{t=1}^\infty$. We will demonstrate how $\L$ learns $f$ in the limit.

    First, since $f \in \mathbf{Q}(\M_\D)$, there exists $c_1 \in \N$ s.t. $f \in \mathbf{TIME}_\M(Q(c_1, n))$. Hence, there exists a computational model $M \in \M_\D$ which computes $f$ in $c_2 \cdot Q(c,n)$ steps for some constant $c_2 \in \N$. Now, let $\L$ use some reasonable encoding $\varphi : \D \to \SS$ for which the $q$-ECTT kicks in. Using the $q$-ECTT, there exists a TM $\widetilde{T} \in \T$ which simulates $m$ computation steps of $M$ in at most $q(c_3, m)$ steps for some $c_3 \in \N$. Hence, for every $x \in I \cap D_f$, $\widetilde{T}$ starts with $\varphi(x)$ on the tape and halts with $\varphi(f(x))$ after $q(c_3, c_2 \cdot Q(c_1, |x|))$ steps. Let $c = \max(c_1, c_2, c_3)$. Then, $\widetilde{T}$ computes $f$ in $q(c, c \cdot Q(c, n))$ time.

    Second, $\L$ will use the ordering algorithm $\rho$ from Proposition \ref{prop:enumeration} as a subroutine to generate TM representations from $\T$, sorting them by increasing number of states. Let $T_1, T_2, T_3, \dots$ be the ordering of $\T$ due to $\rho$ and let us consider the set
    \begin{equation*}
        H_f = \{T \in \T : \exists C \in \N \text{ s.t. } T \text{ computes } f_{|I \cap D_f} \text{ within } q(C, C \cdot Q(C, n)) \text{ steps}\}.
    \end{equation*}
    Clearly, $H_f$ is nonempty since $\widetilde{T} \in H_f$. Let $K$ be the smallest number assigned by $\rho$ to a TM in $H_f$ and let $c_K \in \N$ be the smallest constant for which $T_K$ computes $f_{|I}$ within $q(c_K, c_K \cdot Q(c_K, n))$ steps. We will prove that $\L$ converges to $T_K$ after finitely many observations.
    
    Now, let us describe the algorithmic strategy $L$ employs on the set of inputs $S_t = \{w_1, \dots, w_t\}$ at time $t \in \N$. The learner $\L$ will maintain a counter $C \in \N$, s.t. initially $C=1$. Given the current value of $C$, for each $w \in S_t$ and for each TM $T \in \{T_1, \dots, T_C\}$, $\L$ will simulate $q(C, C \cdot Q(C, |w|))$ steps of the computation of $T$ on $w$. This simulation is possible, since $\rho, q,$ and $Q$ are all computable functions known to $\L$. We will say that a TM $T_i$ is $\mathbf{(t, C)}$\textbf{-valid} if $i \leq C$ and $T_i$ correctly computes $f(w)$ in time $q(C, C \cdot Q(C, |w|)), \ \forall w \in S_t$. Let $V_{t,C}$ be the set of all $(t, C)$-valid TMs and note that $\L$ computes $V_{t, C}$ through the enacted simulations. Now, if $V_{t,C}$ is nonempty, $\L$ returns the representation of the TM in $V_{t,C}$ with the smallest $\rho$-number. Otherwise, $\L$ continues the same procedure with $C \leftarrow C + 1$. 

    Note that $V_{t+1, C} \subseteq V_{t,C} \subseteq V_{t,C+1}$. Also note that once $C \geq C_K = \max(K, c_K)$, $T_K \in V_{t, C}, \ \forall t \in \N$. Hence, $\L$ follows a finite procedure and outputs a representation consistent with the example set $E_f(S_t)$ at each time $t \in \N$.

    Finally, observe that since $T_K$ is the smallest member of $H_f$ according to the $\rho$-ordering, then for every $i < K$, there exists an input $w_{t(i)}$ such that either $T_i$ takes more than $q(C_K, C_K \cdot Q(C_K, |w|))$ steps to finish computing or $T_i(w_{t(i)}) \neq f(w_{t(i)})$. In other words, $T_i \notin V_{t(i), C_K}$. Hence, let us take $t^\star = \max(t(1), \dots, t(K-1))$. Then, $T_K$ becomes the smallest $\rho$-ordered member of $V_{t^\star, C_K}$ and gets selected by $\L$ for every $t \geq t^\star$.

    \textbf{Three remarks.} First, $\L$ learns a min-state TM which computes $f_{|I \cap D_f}$ in the limit since $\rho$ orders the elements of $\T$ by increasing number of states. Through a different computable ordering, we could have selected for some other characteristic since $\L$ will always learn the first consistent representation with $f_{|I}$.
    
    Second, if we wanted, we could have made $\L(E_f(S_t))$ run in polynomial time in the size of $E_f(S_t)$ through the following cheap trick: Let $\L$ follow the above algorithm, slowly discarding TM representations by checking consistency with each $w \in S_t$. However, when $\L$ reaches some pre-specified $\texttt{poly}(|E_f(S_t)|)$ number of steps, then we modify $\L$ to return the first non-discarded representation from the $\rho$-ordering. Since, as the observation size grows, more computational resources become available, at some finite point in time $\L$ will have discarded all $T_i$ with $i < K$ through the samples $w_{t(i)}$. Hence, $\L$ runs in polynomial time in the observation size and learns $T_K$ in the limit. However, $\L$ no longer provides a representation consistent with the observations at every time step.
    
    Third, we could have also made $\L$ incremental in the sense that $\L$ keeps track of the smallest index $C_t$ of a model not discarded at time $t \in \N$. Since $V_{t+1, C_t-1} \subseteq V_{t,C_t-1} = \es$, $\L$ should start with $C = C_t$ at time $t+1$. The learner $\L$ could also keep track of which representations $T_i$ got discarded due to wrong and not slow computation.
\end{proof}
\section{TBO--Learning Proofs} \label{app:clock}

\clock*

\begin{proof}
We want to show that for all $f \in G_\D$, all models $M \in \M_\D$ computing $f$, all input sources $I \subseteq \D$, and all surjective orderings $w : \N \surj I \cap D_f$ of the example set $E_M(I) = \{(x, f(x), \tau(M, x)) : x \in I \cap D_f\}$, after some finite time $t^\star = t^\star(M, I, w)$, our learning algorithm $\L$ will converge to a correct representation of $f$. In other words, we want to show that $\forall t \geq t^\star, \ h_{t|I \cap D_f} = f_{|I \cap D_f}$, where $h_t$ is the hypothesis generated by $\L$ at time $t$.

Let us fix a ground truth couple $(f, M) \in G_\D \times \M_\D$, an input source $I \subseteq \D$, and a surjective ordering of the inputs $w : \N \surj I \cap D_f$ s.t. $I \cap D_f = \{w_t\}_{t=1}^\infty$. We will demonstrate how $\L$ learns $f$ in the limit.

Now, if $M$ takes $t_M(x)$ steps to compute $f(x)$, then $t_M(x) \leq p \atb(M, x)$ for some constant $p \in \N$. Let $\L$ use some reasonable encoding $\varphi : \D \to \SS$ for which the $q$-ECTT kicks in. Invoking the $q$-ECTT, there exists a TM $\widetilde{T} \in \T$ which simulates $m$ computation steps of $M$ in at most $q(c', m)$ steps for some $c' \in \N$. In other words, on input $\varphi(x)$, $\widetilde{T}$ takes at most $q(c, \ceil{p \atb(M,x)})$ steps to compute $\varphi(f(x))$ for every $x \in \D$. Let $c = \max(c', \ceil{p})$. Then, $\widetilde{T}$ simulates $f(x)$ through $\varphi$ in $q(c, c \atb(M,x))$ time.

The proof now proceeds identically to the proof of \Cref{thm:blackbox}. Hence, we find a polynomial-time (w.r.t the observation size) incremental learning algorithm $\L$ which learns in the limit a minimal-state TM simulating $f_{|I \cap D_f}$ through the encoding $\varphi$. Different encodings will lead to different solutions in the limit. 
\end{proof}
\section{PTO--Learning Proofs} \label{app:cot}

\reduction*

\begin{proof}
    We already covered the reduction part of the lemma. For the learnability part, we give a learning-by-enumeration algorithm $\L$. Let $\chi \in \mathcal{P}_A^B$ be the ground truth, let $I \subseteq A^\star$ be the input source, let $w : \N \surj A^\star$ be a surjective ordering of the inputs, and let $S_t = \{w_1, \dots, w_t\} \subseteq I$ be the sample set at time $t \in \N$. Since the set $\Phi_A^B$ of FSTs computing $\mathcal{P}_A^B$ is recursive, let $\{M_n\}_{n=1}^\infty$ be some computable ordering of the automata in $\Phi_A^B$ used by $\L$. Let $K$ be the index of the first FST in that ordering which generates a rational function agreeing with $\chi$ on $I$. That is, if $\chi_i$ is the rational function due to $M_i$ for every $i \geq 1$, then $\chi_K$ is the one with the smallest index s.t. $\chi_{K|I} = \chi_{|I}$. Clearly, such an index exists since some automaton $M_s$ computes $\chi$. 

    Now, at observation step $t \in \N$, $\L$ goes through the automata $\{M_n\}_{n=1}^\infty$ in order and checks for consistency with $\chi$ on $S_t$. We set $\L$ to return the first encountered FST which agrees with $\chi$ on $S_t$. Hence, at each step $\L$ returns a consistent representation. Also, after finitely many observations, $\L$ will converge to $M_K$. Indeed, since $M_K$ is the first consistent FST on $I$, for each $i < K$, there exists an input $w_{t(i)}$ s.t. $M_i(w_{t(i)}) \neq \chi(w_{t(i)})$. Hence, after time $t^\star = \max(t(1), \dots, t(K-1))$, $M_K$ will be the first consistent representation in the ordering with the sample set.

    The same remarks given at the end of the proof of \Cref{thm:blackbox} apply here. First, the learner $\L$ always learns-in-the-limit the first representation from the ordering consistent with $\chi_{|I}$. Hence, if the ordering lists the automata by increasing size, $\L$ will output the smallest solution. Second, we could make $\L$ run in polynomial-time with respect to the size of the observations $E_\chi(S_t) = \{(w, \chi(w)) : w \in S_t\}$ while not always producing a consistent representation. Third, we could also make $\L$ incremental in the sense of remembering discarded representations from the previous time step.
\end{proof}

\paragraph{Functions $\leftarrow$ Behaviors $\leftarrow$ Representations.} 
For each function $f \in G_\SS$ there exists an infinite set of TMs $\M_f \subset \T$ computing $f$.\footnote{For instance, if $T$ computes $f$, we can let $T_n$ implement $T$ and then cycle through the output $n$ times before halting.} Thus, for any sample $S \subseteq I$, one can observe a multitude of tape behaviors $B_f(S) = \{M[S] : M \in \M_f\}$. Moreover, each $M[S] \in B_f(S)$ generates an infinite equivalence class $[M]_S$ of TMs with the same tape behavior.\footnote{For $T \in \T$, adding dead transitions leads to multiple equivalent representations. In general, $|[T]_\SS| = \infty$.} In our $(\T, \apt)$--LP framework, $\L$ observes $M[S]$ but need not output a TM from $[M]_S$. Indeed, behavior observations should only aid the learning of $f$ and not become the object of learning. Ideally, the size of the smallest TM in $[M]_I$—the set of consistent representations in the limit—should determine the learning complexity.

\unbounded*

\begin{proof}
    If $L_\Sigma \subset \T$ denotes the class of linear-time running halting TMs, we will prove the stronger statement that there cannot exist characteristic sets of $\b$-bounded mass for $L_\Sigma$ under $\apt$-observations. We will argue by contradiction. Suppose that for the algorithm $\L$ learning $\F(L_\Sigma)$ in the limit from $\apt$-observation of $L_\Sigma$, there exist $\b$-bounded characteristic sets such that $\n{S_T} \leq \b(T), \forall T \in L_\Sigma$. Then, we will show the existence of a TM solving the Halting Problem.

    For a TM $T \in \T$ and $w \in \SS$, we construct the pair of TMs $Y_{T,w}$ and $N_{T,w}$ specified as follows. On input $x \in \SS$, $Y_{T,w}$ and $N_{T,w}$ both simulate the execution of $T(w)$ for $|x|$ steps, and if $T$ halts, $Y_{T,w}$ outputs 1 and halts and $N_{T,w}$ outputs $0$ and halts. If $T$ does not halt on $w$ for $|x|$ steps, then both $Y_{T,w}$ and $N_{T,w}$ halt immediately, having the exact same tape behavior $Y_{T,w}[x] = N_{T,w}[x]$. Clearly, $\{Y_{T,w}, N_{T,w} : T \in \T, w \in \SS\} \subset L_\Sigma$. Hence, $\forall T \in \T \forall w \in \SS$, $\n{S_{Y_{T,w}}} \leq \b(Y_{T,w})$ and $\n{S_{N_{T,w}}} \leq \b(N_{T,w})$. 
    
    For every $T \in \T$ and for every $w \in \SS$, let $\langle T, w \rangle$ denote the encoding of the pair $(T, w)$ in $\SS$. Now, we define the following TM $H \in \T$ solving the Halting Problem. Upon receiving $\langle T, w \rangle$ as input, $H$ constructs representations of the TMs $Y_{T,w}$ and $N_{T,w}$ and runs them on $\b$, thereby producing the bound $B = \max(\b(Y_{T,w}), \b(N_{T,w}))$. We know from \Cref{lem:distinguish} that if $Y_{T,w}$ and $N_{T,w}$ compute different functions, then there must exist an $x \in S_{Y_{T,w}} \cup S_{N_{T,w}}$ such that $\eta_{Y_{T,w}}(x) \neq \eta_{N_{T,w}}(x)$. In other words, if $T$ halts, then $Y_{T,w}(x) \neq N_{T,x}(x)$ for some $x \in S_{Y_{T,w}} \cup S_{N_{T,w}}$. If $T$ does not halt, however, then
    \begin{equation*}
        \eta_{Y_{T,w}}(x) = (x, Y_{T,w}(x), Y_{T,w}[x]) = (x, N_{T,w}(x), N_{T,w}[x]) = \eta_{N_{T,w}}(x), \ \forall x \in \SS.
    \end{equation*}
    Hence, $H$ just needs to check if $Y_{T,w}$ and $N_{T,w}$ agree on $S_{Y_{T,w}} \cup S_{N_{T,w}}$ to determine whether $T$ halts on $w$.

    Now, since $\n{S_{Y_{T,w}}}, \n{S_{N_{T,w}}} \leq B$, then $\forall x \in S_{Y_{T,w}} \cup S_{N_{T,w}}, \ |x| \leq B$. Therefore, $S_{Y_{T,w}} \cup S_{N_{T,w}} \subset \Sigma^B$. Thus, $H$ can simulate $Y_{T,w}$ and $N_{T,w}$ on all inputs from $\Sigma^B$ to solve the Halting Problem for $(T, w)$ -- contradiction.

    Similarly, if there existed a computable function $\mathfrak{C} : \T \to \SS$ assigning characteristic sets to representations in $\T$ for the learner $\L$, then $H$ could have used $\mathfrak{C}$ to construct $S_{Y_{T,w}} \cup S_{N_{T,w}}$ and check for a distinguishing string $x$ which would prove whether $T$ halts on $w$.
\end{proof}

\paragraph{The Correct Measure for Learning Efficiency.}
In light of the proof of \Cref{thm:unbounded}, one might argue that learning efficiency should be measured not by the \textit{mass} of characteristic sets $\n{S_M}$ but by their \textit{size} $|S_M|$. Indeed, the proof shows that a learner cannot distinguish between the functions computed by $Y_{T,w}$ and $N_{T,w}$ until the input length $|x|$ is large enough for $T$ to halt. If $T$ requires exponential time in the representation sizes of $Y_{T,w}$ and $N_{T,w}$ to halt on $w$, it is unreasonable to penalize the learner for needing a long input $x$ to learn. Instead, the burden of hard examples should fall on the teacher. Thus, an efficient learner should require \textit{few}, not necessarily \textit{short}, examples. 
This raises the open question of whether a polynomial-time algorithm exists for learning $G_\SS$ in the limit from policy-trajectory observations of $\T$ with polynomial-size characteristic sets.

Let us define a model class $\M$ as \textbf{$\apt$-restrictively identifiable in polynomial time and samples} ($\apt$-RIPTS) if there exists a polynomial-time learner $\L$ for which every $M \in \M$ admits a characteristic set with polynomial size $|S_M| = \texttt{poly}(|M|)$ regardless of the input source $I$. Hence, due to the reduction established by \Cref{thm:reduction}, a more interesting and general open question becomes if the rational function class $\mathcal{P}_A^B$ is $\eps$-RIPS: i.e., RIPS from input-output observations.

\MSM*

\begin{proof}
    Let $f \in \CC_\SS$. We will prove the existence of an alphabet $\G_f \supset \Sigma$ and a TM $T_f \in \mathrm{T}_\Sigma^{\G_f}$ which computes $f$ and whose tape-behavior observations allow the MSM algorithm to learn $f$ in the limit for any input source $I \subseteq \SS$.
    
    Let $\G = \Sigma \cup \{\l\}$ and recall Definition \ref{def:TM} and the discussion underneath for the architectural specifics of our class of TMs $\T$. In particular, note that the transition function allows for the TM head to stay in-place. As proved in many textbooks on complexity and computability \cite{papadimitriou1993computational, sipser1996introduction, arora2009computational}, a TM with a transition function requiring constant left or right movement can compute any recursive function on $\SS$. Let $T_f'= (Q', \G, \d', q_0, q_f) \in \T$ compute $f$ and have a transition function that never stays in-place. We will now describe the transformation of $T_f'$ into $T_f$. Let $Q' = \{q_1, \dots, q_n\}$. We arbitrarily order the $m$ transitions of $T_f'$ as $\{e_i\}_{i=1}^m$, and if $e_k = q_i \xrightarrow{\s : \s', D} q_j$, we add a dummy states $p_k$ and substitute $e_k$ in the transition diagram with the path $$q_i \xrightarrow{\s : \g_k, S} p_k \xrightarrow{\g_k : \s', D} q_j,$$ where $D \in \{L,R\}$ and where $\g_1, \dots, \g_m \notin \G$. We let $\G_f = \G \cup \{\g_1, \dots, \g_m\}$, and  we assign some other arbitrary transitions to the states $p_k$ for every symbol in $\G_f$ to obtain a well-defined transition function $\d$ on the set of states $Q = Q' \cup \{p_k\}_{1 \leq k \leq m}$. Then, we define $T_f$ as $T_f = (Q,\G_f,\d,q_0,q_f)$. Clearly, $T_f$ also computes $f$, but now each computation takes twice as long.

    Suppose now that we receive behavior observations from $T_f$ from an input source $I$. Let $\psi(T_f) \in \Phi_A^B$ correspond to the FST with the same transition diagram as $T_f$ where $A = \G_f$, $B = \G_f \cup \{L,R,S\}$. Moreover, let $M = I(\psi(T_f)) \in \wt{\Phi}_A^B$ denote the partial automaton produced from $\psi(T_f)$ by removing the unused transitions when all inputs come from $I$. We will show that MSM learns the representation of $M$ in the limit, which proves that MSM learns a hypothesis function $h$ agreeing with $f$ on $I$.

    Let $t^\star$ denote the time at which the sample $S_{t^\star} \subseteq I$ exercises all transition of $M$. We prove that for $t \geq t^\star$, MSM$(S_t) \equiv_I M$. Let $Q_M$ be the set of states of $M$ and let $Q_T$ be the set of states of the observation tree $\mathcal{T}_M(S_t)$. Also, let $\phi : Q_T \to Q_M$ assign the `true' value to the tentative states of $\mathcal{T}_M(S_t)$. In other words, if $\texttt{path}(q)$ corresponds to the input path leading from the root of $\mathcal{T}_M(S_t)$ to $q \in Q_T$, then $\phi(q) = \d_M(q_0, \texttt{path}(q))$.

    Now, with a slight abuse of notation, let $X = Q_M \cap Q'$ -- i.e., the computing states of $T_f'$ -- and let $Y = Q_M \cap \{p_i^j\}_{1\leq i \leq j \leq m}$ -- i.e., the dummy states. Note that if $\phi(p), \phi(q) \in X$ and $\phi(p) \neq \phi(q)$, then either there exists a distinguishing string for $p$ and $q$ in $\mathcal{T}_M(S_t)$, or $s(p, q) = 1$. Moreover, if $\phi(p), \phi(q) \in Y$ and $\phi(p) \neq \phi(q)$, then $\mathcal{T}_M(S_t)$, or $s(p, q) = 1$. Hence, first all of the states in $\mathcal{T}_M(S_t)$ corresponding to the same states in $X$ will get merged due to higher similarity. Only after all of the states in $\phi^{-1}(X)$ get correctly merged, will the MSM algorithm consider other mergers which will only lead to equivalent to $M$ automata under the input source $I$.

    \textbf{Remark.} Note that MSM needs at most $m$ samples to learn $T_f$ -- one sample to cover each transition.
\end{proof}


\end{document}